\def\figref#1{figure~\ref{#1}}
\def\secref#1{section~\ref{#1}}
\def\eqref#1{equation~\ref{#1}}
\def\1{\bm{1}}
\DeclareMathAlphabet{\mathsfit}{\encodingdefault}{\sfdefault}{m}{sl}
\SetMathAlphabet{\mathsfit}{bold}{\encodingdefault}{\sfdefault}{bx}{n}
\def\sA{{\mathbb{A}}}
\def\sB{{\mathbb{B}}}
\def\sG{{\mathbb{G}}}
\def\sN{{\mathbb{N}}}
\def\sS{{\mathbb{S}}}
\newcommand{\E}{\mathbb{E}}
\DeclareMathOperator*{\argmax}{argmax}
\DeclareMathOperator*{\argmin}{argmin}
\newtheorem{definition}{Definition}[section]
\newtheorem{theorem}{Theorem}[section]
\newtheorem{corollary}{Corollary}[theorem]
\title{Plan-Based Relaxed Reward Shaping for Goal-Directed Tasks}
\author{Ingmar Schubert$^1$, Ozgur S. Oguz$^{2,3}$, and Marc Toussaint$^{1,2}$ \\
$^1$ Learning and Intelligent Systems Group, TU Berlin, Germany \\
$^2$ Max Planck Institute for Intelligent Systems, Stuttgart, Germany \\
$^3$ Machine Learning and Robotics Lab, University of Stuttgart, Germany \\
\texttt{\{ingmar.schubert,toussaint\}@tu-berlin.de} \\
\texttt{ozgur.oguz@ipvs.uni-stuttgart.de}
}
\begin{document}

\maketitle

\begin{abstract}
        In high-dimensional state spaces, the usefulness of Reinforcement Learning (RL) is limited by the problem of exploration.
        This issue has been addressed using potential-based reward shaping (PB-RS) previously.
        In the present work, we introduce Final-Volume-Preserving Reward Shaping (FV-RS).
        FV-RS relaxes the strict optimality guarantees of PB-RS to a guarantee of preserved long-term behavior.
        Being less restrictive, FV-RS allows for reward shaping functions that are even better suited for improving the sample efficiency of RL algorithms.
        In particular, we consider settings in which the agent has access to an approximate plan.
        Here, we use examples of simulated robotic manipulation tasks to demonstrate that plan-based FV-RS can indeed significantly improve the sample efficiency of RL over plan-based PB-RS.
\end{abstract}

\section{Introduction}
Reinforcement Learning (RL) provides a general framework for autonomous agents to learn complex behavior, adapt to changing environments, and generalize to unseen tasks and environments with little human interference or engineering effort.
However, RL in high-dimensional state spaces generally suffers from a difficult exploration problem, making learning prohibitively slow and sample-inefficient for many real-world tasks with sparse rewards.

A possible strategy to increase the sample efficiency of RL algorithms is reward shaping \citep{mataric1994reward,randlov1998learning}, in particular potential-based reward shaping (PB-RS) \citep{ng1999policy}.
Reward shaping provides a dense reward signal to the RL agent, enabling it to converge faster to the optimal policy.
In robotics tasks, approximate domain knowledge is often available and can be used by a planning algorithm to generate approximate plans.
Here, the resulting plan can be provided to the RL agent using plan-based reward shaping \citep{grzes2008plan,brys2015reinforcement}.
Thus, plan-based reward shaping offers a natural way to combine the efficiency of planning with the flexibility of RL.
We analyze the use of plan-based reward shaping for RL.
The key novelty is that we theoretically introduce Final-Volume-Preserving Reward Shaping (FV-RS), a superset of PB-RS.
Intuitively speaking, FV-RS allows for shaping rewards that convey the information encoded in the shaping reward in a more direct way than PB-RS, since the value of following a policy is not only determined by the shaping reward at the end of the trajectory, but can also depend on all intermediate states.

While FV-RS inevitably relaxes the optimality guarantees provided by PB-RS, we show in the experiments that FV-RS can significantly improve sample efficiency beyond PB-RS, e.g.\ allowing RL agents to learn simulated 10-dimensional continuous robotic manipulation tasks after ca.\ 300 rollout episodes.
We argue that the strict notion of optimality in PB-RS is not necessary in many robotics applications, while on the other hand relaxing PB-RS to FV-RS facilitates speeding up the learning process.
Using FV-RS could be a better trade-off between optimality and sample efficiency in many domains.
The contributions of this work are:
\begin{itemize}
        \item We introduce FV-RS as a new class of reward shaping for RL methods in general.
        \item We propose to specifically use FV-RS for plan-based reward shaping.
        \item We show that compared to no RS and plan-based PB-RS, plan-based FV-RS significantly increases the sample efficiency in several robotic manipulation tasks.
\end{itemize}
\section{Related Work}
\subsection{Sparse Rewards and Reward Shaping}
In many real-world RL settings, the agent is only given sparse rewards, exacerbating the exploration problem.
There exist several approaches in the literature to overcome this issue.
These include mechanisms of intrinsic motivation and curiosity \citep{barto2004intrinsically,oudeyer2007intrinsic,schembri2007evolving}, which provide the agent with additional intrinsic rewards for events that are novel, salient, or particularly useful for the learning process.
In reward optimization \citep{sorg2010reward, sequeira2011emotion, sequeira2014learning}, the reward function itself is being optimized to allow for efficient learning.

Similarly, reward shaping \citep{mataric1994reward,randlov1998learning} is a technique to give the agent additional rewards in order to guide it during training.
In PB-RS \citep{ng1999policy,wiewiora2003potential,wiewiora2003principled,devlin2012dynamic}, this is done in a way that ensures that the resulting optimal policy is the same with and without shaping.
\citet{ng1999policy} also showed that the reverse statement holds as well; PB-RS is the only type of modification to the reward function that can guarantee such an invariance if no other assumptions about the Markov Decision Process (MDP) are made.

In this work, we introduce Final-Volume-Preserving Reward Shaping (FV-RS), a subclass of reward shaping that is broader than PB-RS and not necessarily potential-based, and therefore is not guaranteed to leave the optimal policy invariant.
However, FV-RS still guarantees the invariance of the asymptotic state of the MDP under optimal control.
In the experiments section, we show that this relaxed notion of reward shaping allows us to substantially improve the sample efficiency during training.

\subsection{Demonstration- and Plan-Based Reward Shaping}
Learning from Demonstration (LfD) aims at creating a behavioral policy from expert demonstrations.
Existing approaches differ considerably in how the demonstration examples are collected and how the policy is derived from this \citep{argall2009survey,ravichandar2020recent}.
The HAT algorithm \citep{taylor2011integrating} introduces an intermediate policy summarization step, in which the demonstrated data is translated into an approximate policy that is then used to bias exploration in a final RL stage.
In \citet{hester2017deep}, the policy is simultaneously trained on expert data and collected data, using a combination of supervised and temporal difference losses.
In \citet{salimans2018learning}, the RL agent is at the start of each episode reset to a state in the single demonstration. Other approaches \citep{thomaz2006reinforcement,knox2010combining} rely on interactive human feedback during the training process.

At the intersection of RL and LfD, reward shaping offers a natural way to include expert demonstrations or plans of the correct behavior into an RL training process.
Prior work in this area includes using abstract plan knowledge represented in the form of STRIPS operators to create a potential function for PB-RS \citep{grzes2008plan,efthymiadis2016overcoming,devlin2016plan}, which has been applied to the strategy game of Starcraft \citep{efthymiadis2013using}.
\citet{brys2015reinforcement} use expert demonstrations to directly construct a Gaussian potential function, and in \citet{suay2016learning}, this is extended to include multiple demonstrations that are translated into a potential function using Inverse Reinforcement Learning as an intermediate step.

In this work, we use a planned sequence in state-space to construct a shaping function similar to \citet{brys2015reinforcement}, but in contrast to the aforementioned work, we do not use this shaping function as a potential function for PB-RS.
Instead, we use it directly as a reward function for FV-RS.
We show that this significantly improves the sample efficiency during training.

\section{Background}
\subsection{Markov Decision Processes and Reinforcement Learning}
We consider decision problems that can be described as a discrete-time MDPs \citep{bellman1957markovian} $\langle \sS, \sA, T, \gamma, R \rangle$. Here, $\sS$ is the set of all possible states, and $\sA$ is the set of all possible actions.
$T: \sS \times \sA \times \sS \rightarrow [0,1]$ describes the dynamics of the system; $T(s'|s,a)$ is the probability (density) of the next state being $s'$, provided that the current state is $s$ and the action taken is $a$. After the transition, the agent is given a reward $R(s,a,s')$. The discount factor $\gamma \in [0,1)$ trades off immediate and future reward.

The goal of RL is to learn an optimal policy $\pi^*: \sS, \sA \rightarrow [0,1] $ that maximizes the expected discounted return, i.e.
\begin{align}
        \label{eq:rl_problem}
        \pi^* = \argmax_{\pi} \sum_{t=0}^\infty \gamma^t \E_{s_{t+1} \sim T(\cdot\mid s_t,a_t),\, a_{t} \sim \pi(\cdot\mid s_t), s_0 \sim p_0}\left[ R(s_t,a_t,s_{t+1})  \right] \text{,}
\end{align}
where $p_0$ is the initial-state distribution,
from collected transition and reward data $D = \{(s_i,a_i,s_i')\}_{i=0}^n$.
The $Q$-function of a given policy $\pi$ is the expected return for choosing action $a_0$ in state $s_0$, and following $\pi$ thereafter, i.e.
\begin{align}
        Q^\pi(s_0,a_0) = \E_{s_{t+1} \sim T(\cdot\mid s_t,a_t),\, a_{t} \sim \pi(\cdot\mid s_t)} \left[ \sum_{t=0}^\infty \gamma^t R(s_t,a_t,s_{t+1})   \right] \quad \text{.}
\end{align}
There exists a range of RL algorithms to solve \eqref{eq:rl_problem}.
Our reward-shaping approach only modifies the reward and therefore can be combined with any RL algorithm.
In this work, we are interested in applications in robotics, where both $\sS$ and $\sA$ are typically continuous.
A popular algorithm in this case is Deep Deterministic Policy Gradient (DDPG) \citep{lillicrap2015continuous}, which will be used for most of the robotic manipulation examples in this work.
An exception to this are the examples in appendix \ref{sec:non_ddpg_examples}, which use Proximal Policy Optimization (PPO) \citep{schulman2017proximal}.

\subsection{Potential-Based Reward Shaping}
In many RL problems, rewards are sparse, making it harder for the RL algorithm to converge to the optimal policy. One possibility to alleviate this problem is to modify the reward $R$ of the original MDP with a shaping reward $F$.
\begin{align}
        \tilde{R}(s,a,s') = R(s,a,s') + F(s,a,s')
\end{align}
In general, the optimal policy $\tilde{\pi}^*$ of the MDP $\tilde{M}$ with the shaped reward $\tilde{R}$ is different from the optimal policy $\pi^*$ of the MDP $M$ with the original reward $R$.
\citet{ng1999policy} showed however that $\tilde{\pi}^* \equiv \pi^*$ if and only if $F(s,a,s')$ is derived from a potential function $\Phi: \sS \rightarrow \mathbb{R}$:
\begin{align}
        F(s,a,s') = \gamma \Phi(s') - \Phi(s)
\end{align}
This proof was extended \citep{wiewiora2003principled} to potential functions of both actions and states, provided that the next action taken, $a'$, is known already.
A further generalization to time-dependent potential functions was introduced in \citet{devlin2012dynamic}.

\section{Final-Volume-Preserving Reward Shaping}
\label{sec:cp_rs}
In the following, we introduce the notion of the optimal final volume, which will then allow us to introduce Final-Volume-Preserving Reward Shaping (FV-RS).
FV-RS does not guarantee the invariance of the optimal policy like PB-RS does, but instead provides a less restrictive guarantee of preserved long-term behavior.

\subsection{Optimal Final Volume}
\begin{definition}
        \label{def:convergence_under_optimal_control}
        Let $M$ be an MDP with state-space $\sS$.
        If the set $\sG = \cap_{\sG_a \in \sA} \sG_a$ is non-empty, we call $\sG$ the \textbf{optimal final volume of M}.
        Here, $\sA$ is the set of non-empty sets $\sG_a \subseteq \sS$ such that
        \begin{align}
                \exists t_0>0: \quad P_{s_t \sim q_{\pi^*}}(s_t \in \sG_a) = 1 \quad \forall t\geq t_0 \quad \text{.}
        \end{align}
\end{definition}
Thus, the optimal final volume of an MDP $M$ is the minimal subset of the state space $\sS$ of $M$ that the MDP can be found in under optimal control with probability $1$ after finite time.

Notice that in order to keep the notation more concise, we use the strict requirement $P_{s_t \sim q_{\pi^*}}(s_t \in \sG) = 1$ instead of e.g. $P_{s_t \sim q_{\pi^*}}(s_t \in \sG) \geq 1- \epsilon$ in definition \ref{def:convergence_under_optimal_control}.
For many stochastic MDPs, the stationary distribution under optimal policy has nonzero density in the entire state space $\sS$. This means that strictly speaking, our definition would result in the trivial optimal final volume $\sG = \sS$ for these MDPs.
In practice however, many interesting MDPs are such that we can find a volume $\sG$ for which it is so unlikely that the optimal agent will leave it after some finite time has passed that we can readily assume that this probability is $0$ for all practical purposes.

\subsection{Compatible MDPs and Final-Volume-Preserving Reward Shaping}
\begin{definition}
        Let $M = \langle \sS, \sA, T, \gamma, R \rangle$ and $\tilde{M} = \langle \sS, \sA, T, \tilde{\gamma}, \tilde{R} \rangle$ be MDPs that are identical apart from different reward functions $R$ and $\tilde{R}$, as well as different discount factors $\gamma$ and $\tilde{\gamma}$.
        Let both $M$ and $\tilde{M}$ have optimal final volumes $\sG$ and $\tilde{\sG}$, respectively.
        We call $\tilde{M}$ \textbf{a compatible MDP to M} iff $\tilde{\sG} \subseteq \sG$,
        and we write $\tilde{M} \subseteq M$.
        If this is the case, we also call $\tilde{R}$ a \textbf{final-volume-preserving} reward shaping function.
\end{definition}
If $\tilde{M} \subseteq M$, this means that although the rewards and discounts $(R, \gamma)$ and $(\tilde{R}, \tilde{\gamma})$ are different and, in contrast to PB-RS, in general result in different optimal policies, they are similar in the sense that after a finite time, the state of $\tilde{M}$ under optimal control will be inside the optimal final volume $\sG$ of $M$.
In other words; in the long run, behaving optimally in $\tilde{M}$ will have the same ``result'' as behaving optimally in $M$.

Since PB-RS leaves the optimal policy invariant, $M$ and $\tilde{M}$ are also compatible if $\tilde{M}$ is the PB-RS counterpart to $M$.
Thus, the notion of compatibility is less restrictive than the notion of an invariant optimal policy; FV-RS is a superset of PB-RS.

\subsection{FV-RS}
We now proceed to describe specific recipes to find compatible counterparts for MDPs with sparse reward.
\begin{theorem}
        \label{theorem:cp_reward_function}
        Let $M$ be an MDP with state space $\sS$ and with the sparse reward function
        \begin{align}
                R(s,a,s') = 1 \quad \text{if}\ s' \in \sB \text{;} \quad R(s,a,s') = 0 \quad \text{else,}
        \end{align}
        where $\sB\subseteq \sS$ is non-empty. Let $M$ have the optimal final volume $\sG$, and let $\sG\subseteq\sB$. Let $\tilde{R}(s,a,s')$ be a reward function that fulfills
        \begin{align}
                 \tilde{R}(s,a,s') = 1 \, \ \quad \text{if}\ s' \in \sB \text{;} \quad \tilde{R}(s,a,s') \leq \Delta \quad \text{else.}
                \label{eq:reward_conditions}
        \end{align}
        Then, for every $0<\Delta<1$ there exists a discount factor $0<\tilde{\gamma}<1$ such that the MDP $\tilde{M}$ corresponding to $\tilde{R}$ and $\tilde{\gamma}$ is compatible with $M$.
\end{theorem}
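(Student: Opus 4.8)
The plan is to exploit the one remaining free parameter, the discount $\tilde\gamma$, and show that pushing it close to $1$ forces any optimal policy of $\tilde M$ to share the long-run behaviour of $M$. Throughout I would compare the two MDPs through their common transition kernel $T$ together with the fact that $R$ and $\tilde R$ agree on $\sB$ (both equal $1$) and differ only on $\sS\setminus\sB$, where $R=0$ while $\tilde R\le\Delta<1$.

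First I would record a benchmark value. Since $\sG$ is the optimal final volume of $M$ and $\sG\subseteq\sB$, the $M$-optimal policy $\pi^*$ drives the state into $\sG\subseteq\sB$ and keeps it there with probability $1$ after some finite time $t_0$. Evaluated under $\tilde R$, this same policy then collects reward $1$ at every step $t\ge t_0$, so its value in $\tilde M$ is at least $\tilde\gamma^{t_0}/(1-\tilde\gamma)-C$, where $C$ bounds the finitely many transient terms before $t_0$ (assuming, as is standard, that $\tilde R$ is bounded below). In particular $\pi^*$ attains the maximal asymptotic reward rate $1$ in $\tilde M$.

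Next I would argue by contradiction that an optimal policy $\tilde\pi^*$ of $\tilde M$ cannot spend a non-vanishing fraction of time outside $\sB$. Every step with $s'\notin\sB$ earns at most $\Delta$ rather than $1$, a per-step deficit of at least $1-\Delta>0$. If $\tilde\pi^*$ failed to remain in $\sB$ with probability $1$ after every finite time, this deficit would accumulate; I would convert the accumulated deficit into a lower bound on the gap between the value of $\pi^*$ and that of $\tilde\pi^*$, showing that this gap becomes strictly positive once $\tilde\gamma$ is taken close enough to $1$ (the deficit scales like $(1-\Delta)$ times the discounted occupancy of $\sS\setminus\sB$, whereas any transient advantage of straying from $\sB$ is bounded). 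This contradicts optimality, so there is a threshold $\tilde\gamma<1$ beyond which $\tilde\pi^*$ is confined to $\sB$ with probability $1$ after a finite time, giving $\tilde\sG\subseteq\sB$.

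Finally I would upgrade $\tilde\sG\subseteq\sB$ to $\tilde\sG\subseteq\sG$, hence $\tilde M\subseteq M$. Once $\tilde\pi^*$ is confined to $\sB$, its reward stream under $\tilde R$ coincides step-for-step with the stream it would earn under $R$ (both are identically $1$ on trajectories remaining in $\sB$); combined with the shared kernel $T$, the recurrent behaviour of $\tilde\pi^*$ is then governed by the same optimality conditions that define the optimal final volume $\sG$ of $M$. I expect the quantitative choice of the $\tilde\gamma$ threshold in the third step to be the main obstacle: one must make the ``asymptotic rate dominates transient'' heuristic precise uniformly over start states and over all competing policies, controlling both the possibly large negative values of $\tilde R$ off $\sB$ and the time $t_0$ needed to reach $\sB$. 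The passage from $\sB$ down to $\sG$ is the second delicate point, as it relies on $\sG$ being the genuinely minimal invariant region into which the confined dynamics settle.
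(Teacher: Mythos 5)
Your first two steps are, in substance, the paper's own argument: the return obtainable while remaining outside $\sB$ forever is capped by $\Delta/(1-\tilde\gamma)$, whereas reaching $\sB$ within $t_0$ steps and sustaining reward $1$ there yields on the order of $\tilde\gamma^{t_0-1}/(1-\tilde\gamma)$, so the comparison tips in favour of reaching $\sB$ once $\tilde\gamma>\Delta^{1/(t_0-1)}$ (exactly the bound recorded in corollary \ref{cor:gamma_lower_bound}). Your additional constant $C$ for negative transient rewards is a reasonable refinement (the paper's lower bound tacitly assumes $\tilde R\ge 0$ en route, which the theorem statement does not actually impose), but the core quantitative mechanism is identical.

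The genuine gap is your step 3, which you flag as "delicate" but do not close, and which does not go through as described. Confinement of $\tilde\pi^*$ to $\sB$ after finite time gives $\tilde\sG\subseteq\sB$, whereas compatibility requires $\tilde\sG\subseteq\sG$, and the inference "the reward streams coincide on $\sB$, hence the recurrent behaviour obeys the same optimality conditions as in $M$" is false in general: the two MDPs rank policies by different objectives (different discount factors and different rewards on $\sS\setminus\sB$), so they can disagree about \emph{where in $\sB$} to settle. Concretely, suppose $\sB$ contains two separately sustainable regions, one reachable in $t_1=2$ steps along a path with $\tilde R=0$ (so $\sG$ lies there, since $M$ prefers the faster option) and one reachable in $t_2=3$ steps along a path with $\tilde R=\Delta$. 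Then $\tilde M$ prefers the second region iff $\Delta(1-\tilde\gamma^{2})>\tilde\gamma-\tilde\gamma^{2}$, i.e.\ iff $\Delta>\tilde\gamma/(1+\tilde\gamma)$, which holds for \emph{every} $\tilde\gamma\in(0,1)$ once $\Delta\ge 1/2$; in that case $\tilde\sG$ is disjoint from $\sG$ and no choice of $\tilde\gamma$ rescues compatibility. So the passage from $\sB$ down to $\sG$ needs an additional hypothesis (e.g.\ that $\sB$ supports essentially a single sustainable region, or that $\tilde R$ does not reward detours toward $\sB\setminus\sG$) rather than a sharper estimate. You should be aware that the paper's own proof shares this lacuna --- it only compares "navigate to $\sG$" against "stay outside $\sB$ forever" and then asserts that the $\tilde M$-optimal policy heads to $\sG$ rather than to some other part of $\sB$ --- so your proposal reproduces the published argument up to and including its weakest point, but does not supply the missing piece.
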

\begin{proof}
        See appendix \ref{sec:cp_reward_function_proof}.
\end{proof}
This theorem gives us a worst-case bound on the reward function:
Independently of how small the difference $1- \Delta$ between the reward in $\sB$ and elsewhere is, as long as the conditions in \eqref{eq:reward_conditions} are fulfilled, we can select a sufficiently large $\tilde{\gamma} < 1$ that renders $\tilde{M}$ a compatible MDP to $M$.
\begin{corollary}
        \label{cor:gamma_lower_bound}
        This lower bound on $\tilde{\gamma}$ is $\tilde{\gamma} > \Delta^{1/(t_0-1)}$.
\end{corollary}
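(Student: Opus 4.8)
The plan is to recover the explicit threshold by tracing through the comparison that underlies Theorem~\ref{theorem:cp_reward_function} and solving the resulting inequality for $\tilde\gamma$. Recall that, by Definition~\ref{def:convergence_under_optimal_control} applied to $M$, under the optimal policy of $M$ the state lies in $\sG \subseteq \sB$ with probability $1$ for all $t \geq t_0$. Since the transition reward depends on the successor state $s'$, the earliest transition that is guaranteed to land in $\sB$ occurs at time $t_0-1$, and on every such transition both $R$ and $\tilde R$ equal $1$, because the two reward functions agree on $\sB$. Thus inside $\sB$ the two MDPs are indistinguishable, and the only place where $\tilde M$ can differ is outside $\sB$, where $\tilde R \leq \Delta$.

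First I would isolate the worst case that forces $\tilde\gamma$ to be large. The maximal attainable per-step reward is $1$, achieved precisely inside $\sB$, whereas outside $\sB$ the agent collects at most $\Delta$ per step. To certify that an optimal $\tilde M$-agent still drives the state into $\sB$ and keeps it there, I compare the discounted tail of a policy that is inside $\sB$ from time $t_0-1$ onward, whose value is at least $\sum_{t \geq t_0-1}\tilde\gamma^t = \tilde\gamma^{t_0-1}/(1-\tilde\gamma)$, against the best an agent can achieve while remaining outside $\sB$, namely $\sum_{t\geq 0}\tilde\gamma^t\Delta = \Delta/(1-\tilde\gamma)$. Requiring the former to strictly dominate the latter gives $\tilde\gamma^{t_0-1} > \Delta$, and solving for $\tilde\gamma$ yields $\tilde\gamma > \Delta^{1/(t_0-1)}$. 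Because $0<\Delta<1$ and $t_0>1$, this threshold lies in $(0,1)$, which is exactly the regime in which the theorem asserts a compatible $\tilde\gamma$ exists.

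I expect the main obstacle to be the bookkeeping that turns this one-shot tail comparison into a statement about the optimal final volume. Two points need care. First, outside $\sB$ the shaping reward $\tilde R$ is only bounded above by $\Delta$ and may be negative, so a direct lower bound on the return of the ``good'' policy is unavailable; the clean way around this is to compare two policies that share a common prefix and diverge only at the decision to enter or avoid $\sB$, so that the undetermined transient rewards cancel and only the tails remain. Second, the comparison above establishes that the optimal $\tilde M$-policy eventually remains in $\sB$; upgrading this to $\tilde\sG \subseteq \sG$ relies on the fact that $M$ and $\tilde M$ have identical dynamics and identical rewards on $\sB$, so the induced sub-problem of staying within $\sB$ is the same for both, forcing their final volumes to coincide there. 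Verifying that the horizon $t_0$ governing $M$ is also the correct delay for re-entry in $\tilde M$, and that $t_0>1$ so that the exponent $1/(t_0-1)$ is well defined, completes the argument.
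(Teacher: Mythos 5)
Your argument is correct and matches the paper's own derivation: the corollary is extracted from exactly the same comparison of the tail return $\tilde{\gamma}^{t_0-1}/(1-\tilde{\gamma})$ for reaching $\sB$ within $t_0$ steps against the ceiling $\Delta/(1-\tilde{\gamma})$ attainable while staying outside $\sB$, which rearranges to $\tilde{\gamma} > \Delta^{1/(t_0-1)}$. The additional care you take regarding possibly negative intermediate rewards and upgrading eventual containment in $\sB$ to $\tilde{\sG} \subseteq \sG$ goes beyond what the paper writes down, but it does not change the route.
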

\begin{proof}
        Follows directly from the proof of theorem \ref{theorem:cp_reward_function}.
\end{proof}
Even if $t_0$ is unknown but finite, $\tilde{M}\subseteq M$ if $\tilde{\gamma}$ is chosen large enough. Note that so far, we have imposed no restrictions on $\tilde{R}(s,a,s')$ for $s' \notin \sB$, other than $\tilde{R}(s,a,s') \leq \Delta$.
In that sense, the statements above represent an ``upper bound on the lower bound of $\tilde{\gamma}$'' if we can not assume any additional structure for $\tilde{R}$.
However, if we put more restrictions on $\tilde{R}$, this worst-case bound can be relaxed.
As an example, we discuss the case of a reward function that the agent can follow in a step-wise manner in appendix \ref{sec:step_wise_reward_function}.
There it is shown that, depending on the step width $w$, the lower bound $\tilde{\gamma} > \Delta^{1/(t_0-1)}$ can be relaxed to $\tilde{\gamma} > \Delta^{1/(w-1)}$. In the important case that the agent can follow the reward function monotonically (i.e.\ $w=1$), this lower bound becomes $0$.
For more details and a precise definition of the step width $w$, please refer to appendix \ref{sec:step_wise_reward_function}.

\subsection{Plan-Based FV-RS}
\label{sec:plan_based_mdp}
We now describe how to construct a plan-based FV-RS shaping reward function from a plan, given as a sequence in state space.
The goal is to use the plan to create a continuous reward function that gives dense feedback to the policy to guide it along the plan.
\begin{theorem}
        \label{theorem:cp_plan_based}
        Let $M$ be an MDP with metric state-space $(\sS,d)$ with metric $d: \sS \times \sS \rightarrow \mathbb{R}$ and with the reward function
        \begin{align}
                \label{eq:starting_mdp_reward_function}
                R(s,a,s') = 1  \quad & \text{if}\ s' \in \sB \text{;} \quad
                R(s,a,s') = 0 \quad \text{else,}
        \end{align}
        where $\sB\subseteq \sS$ is non-empty. Let $M$ have the optimal final volume $\sG$, and let $\sG \subseteq \sB$. From a planned sequence $(p_0, p_1, ..., p_{L-1})$ in $\sS$ with $p_{L-1} \in \sB$, we can construct the reward function
        \begin{align}
                 \tilde{R}(s,a,s') = 1 \quad  \text{if}\ s' \in \sB \text{;} \quad
                 \tilde{R}(s,a,s') = (1-\Delta) \frac{k(s')+1}{L} \ g(d(p_{k(s')},s'))\quad \text{else}\text{,}
        \end{align}
        where $k(s')=\argmin_i(d(p_i,s'))$ and $g: \mathbb{R}^{0+} \rightarrow (0,1]$ is strictly monotonically decreasing, where $g(0) = 1$. Then, for every $0<\Delta<1$ there exists $0<\tilde{\gamma}<1$ such that the MDP $\tilde{M}$ corresponding to $\tilde{R}$ and $\tilde{\gamma}$ is a compatible MDP to $M$.
\end{theorem}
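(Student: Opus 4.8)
The plan is to reduce this statement to Theorem~\ref{theorem:cp_reward_function}, which already supplies the machinery for turning a uniform worst-case upper bound on the off-target reward into the existence of a suitable discount factor. Since $\tilde{M}$ and $M$ share the same state space, dynamics, and target set $\sB$, and both assign reward $1$ on $\sB$, the only hypothesis of Theorem~\ref{theorem:cp_reward_function} left to verify is that the plan-based reward is bounded away from $1$ on the complement of $\sB$; everything else, including the eventual choice of $\tilde{\gamma}$ and the lower bound from Corollary~\ref{cor:gamma_lower_bound}, is then inherited.

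First I would bound the off-target reward. For any $s'\notin\sB$ the construction gives
\begin{align}
        \tilde{R}(s,a,s') = (1-\Delta)\,\frac{k(s')+1}{L}\,g(d(p_{k(s')},s')) \text{.}
\end{align}
I would bound each factor separately: $k(s')\in\{0,\dots,L-1\}$ forces $k(s')+1\le L$, so $\tfrac{k(s')+1}{L}\le 1$; and $g$ maps into $(0,1]$, so $g(d(p_{k(s')},s'))\le 1$. Multiplying, $\tilde{R}(s,a,s')\le 1-\Delta$ for every $s'\notin\sB$, and since $0<\Delta<1$ we have $0<1-\Delta<1$, so the off-target reward is uniformly bounded by a constant strictly below $1$. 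The bound is insensitive to the choice of minimizer when $\argmin_i d(p_i,s')$ is not unique, since $k(s')+1\le L$ holds for any index and the minimal distance $d(p_{k(s')},s')$ is common to all minimizers; the extremal value $1-\Delta$ would require $k(s')=L-1$ and $d(p_{L-1},s')=0$, i.e.\ $s'=p_{L-1}\in\sB$, which is excluded from this branch.

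With this bound in hand I would invoke Theorem~\ref{theorem:cp_reward_function}, identifying its threshold parameter with $1-\Delta$ rather than with the $\Delta$ appearing in the present construction. This yields, for each fixed $\Delta\in(0,1)$, a discount factor $\tilde{\gamma}\in(0,1)$ (concretely $\tilde{\gamma}>(1-\Delta)^{1/(t_0-1)}$ via Corollary~\ref{cor:gamma_lower_bound}) for which $\tilde{\sG}\subseteq\sG$, i.e.\ $\tilde{M}\subseteq M$. I expect the only genuine obstacle to be bookkeeping rather than mathematics: one must keep the two distinct roles of $\Delta$ separate, and confirm that the metric-dependent ingredients $k(s')$ and $g(d(\cdot,\cdot))$ enter only through the harmless factors $\tfrac{k(s')+1}{L}\le 1$ and $g\le 1$, so that all the substantive work is absorbed by Theorem~\ref{theorem:cp_reward_function}.
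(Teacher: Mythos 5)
Your proposal is correct and takes exactly the route the paper does: the paper's entire proof of Theorem~\ref{theorem:cp_plan_based} is the one-line remark that it is a special case of Theorem~\ref{theorem:cp_reward_function}, and your verification that the off-target reward is uniformly bounded by $1-\Delta<1$ (via $\tfrac{k(s')+1}{L}\le 1$ and $g\le 1$) is precisely the check that makes that reduction valid. Your observation that the $\Delta$ of the plan-based construction plays the role of $1-\Delta$ in the threshold of Theorem~\ref{theorem:cp_reward_function} is a correct and worthwhile clarification of a notational clash the paper leaves implicit.
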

\begin{proof}
        Special case of theorem \ref{theorem:cp_reward_function}.
\end{proof}
This translates the plan into a continuous reward function that leads towards the target area $\sB$. The corresponding MDP $\tilde{M}$ is guaranteed to result in the same optimal final volume as $M$.
Thus, we established the machinery to translate any MDP $M$ with the sparse reward function in \eqref{eq:starting_mdp_reward_function} into its plan-based FV-RS counterpart MDP $\tilde{M}$.

\section{Experiments}
We demonstrate the efficiency of using FV-RS for plan-based reward shaping using several examples.
We start with a simple discrete toy example to illustrate the difference between PB-RS and FV-RS that is not potential-based.
We then compare PB-RS and FV-RS using a realistic 10-dimensional simulated robotic pushing example.
This is one of several robotic pushing and pick-and-place examples we investigated.
All results can be found in appendix \ref{sec:robotic_man_more_examples}.
Videos of all manipulation examples can be found in the supplementary material.

\subsection{Discrete Toy Example}
\label{sec:toy_example}
We start off considering a simple discrete example as shown in \figref{fig:discrete_example_setup}.
\begin{figure}
        \centering
        \subfloat[Setup\label{fig:discrete_example_setup}]{%
                \includegraphics[width=0.5\linewidth]{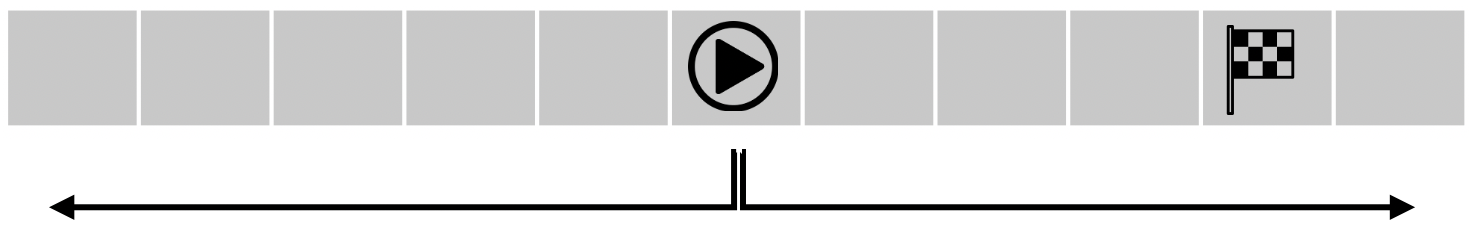}}
        \\
        \hspace{0.05\linewidth}
        \subfloat[Reward functions\label{fig:discrete_example_reward}]{%
                \includegraphics[width=0.4\linewidth]{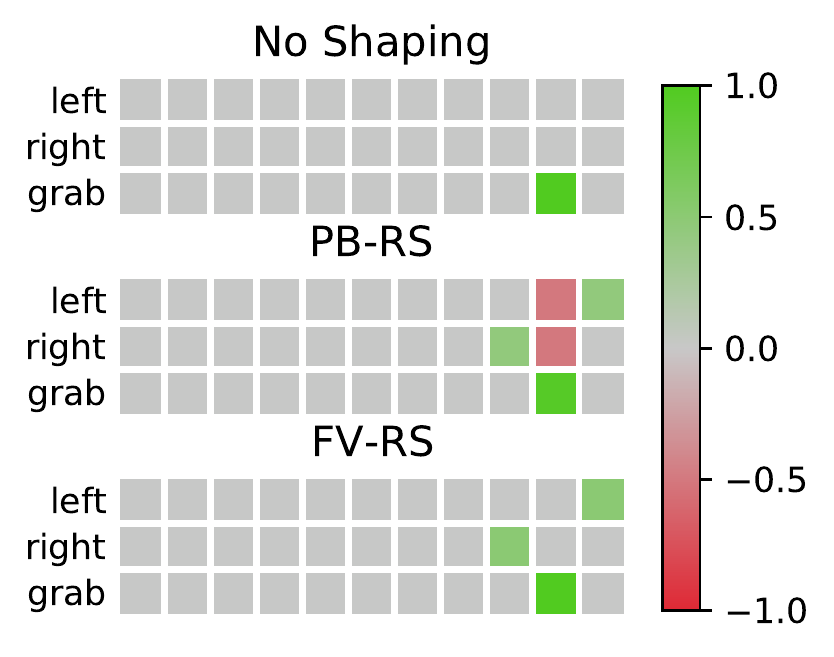}}
        \hfill
        \subfloat[Q-functions for $\pi(s)=\texttt{right}$\label{fig:discrete_example_q_functions}]{%
                \includegraphics[width=0.4\linewidth]{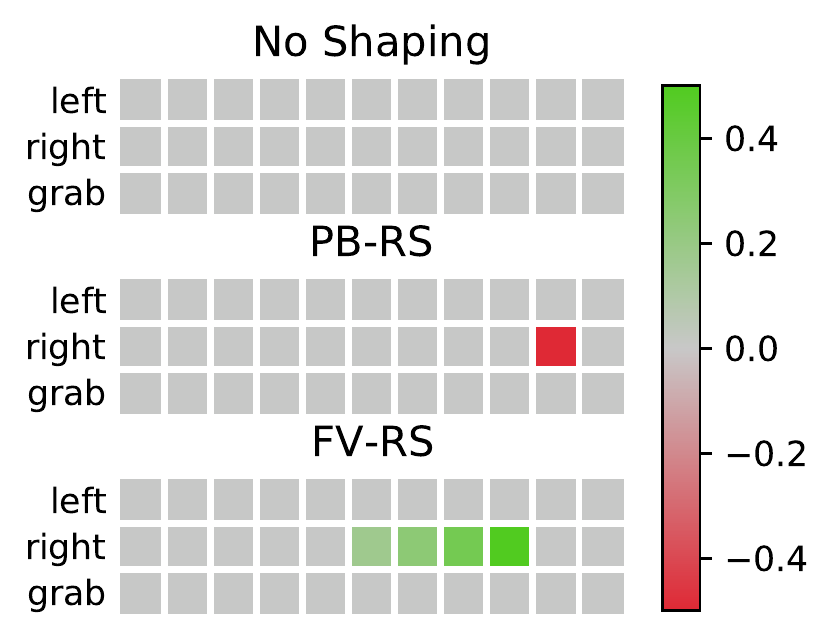}}
        \caption{The toy example described in \secref{sec:toy_example}.
                (a) The agent starts in the middle, and can move to the right and to the left.
                The goal is to grab the flag when at the goal position.
                The agent collects data during two episodes indicated by the arrows.
                (b) The reward functions without shaping, with PB-RS, and with FV-RS.
                (c) The resulting Q-functions for policy $\pi(s)=\texttt{right}$, based on the data from the two episodes.}
        \hspace{0.05\linewidth}
        \label{fig:toy_example}
\end{figure}
The agent starts in the middle and can choose between the actions $\sA = \{\texttt{left}, \texttt{right}, \texttt{grab} \}$, where $\texttt{grab}$ does not change the state, but allows the agent to grab the flag provided that it is already at the position of the flag.
The reward function is shown in \figref{fig:discrete_example_reward}.
Without shaping, reward $1$ is only given if the flag is captured. We also define the potential-based reward function
\begin{align}
        \tilde{R}_\text{PB}(s,a,s') = R(s,a,s') + \tilde{\gamma}\Phi(s') - \Phi(s) \text{;} \qquad
        \Phi(s) = \begin{cases}
                0.5 \quad & \text{if}\ s = \mathrm{\texttt{finish}} \\
                0         & \text{else}
        \end{cases} \quad \text{,}
\end{align}
where the shaping potential $\Phi(s)$ assigns value to being at the correct position, even if the flag is not captured. Similarly, we define the reward function
\begin{align}
        \tilde{R}_\text{FV}(s,a,s') = \begin{cases}
                1 \quad   & \text{if}\ s' = \mathrm{\texttt{finish}} \text{ and } a=\texttt{grab}     \\
                0.5 \quad & \text{if}\ s' = \mathrm{\texttt{finish}} \text{ and } a\neq\texttt{grab} \\
                0         & \text{else}
        \end{cases} \quad \text{,}
\end{align}
which is not potential-based but final-volume-preserving for the discount factor $\gamma=\tilde{\gamma}=0.7$ of the original MDP.
The agent collects data during $2$ rollout episodes starting in the middle, during one of which it always chooses to move \texttt{right} until the end of the episode, and another episode during which the agent always moves \texttt{left}.
Assume that this data is used in an actor-critic setup, where the actor policy is not converged yet and therefore always chooses the action \texttt{right}.
For this given policy and given data, the resulting $Q$-functions are shown in \figref{fig:discrete_example_q_functions}.
Without reward shaping, there is no reward signal collected during the rollouts, and therefore naturally, the $Q$-function contains no information.
With PB-RS, the reward for moving to the correct position is exactly canceled out by the discounted penalty for moving away from it again.
As a result, the values of the $Q$-function at the starting position contain no information about which action is preferable after these two training rollouts.
With FV-RS, the shaping reward is not canceled out and propagated all the way through to the origin. In this case, the $Q$-function provides the agent with the information that moving to the right is preferable if it finds itself at the starting position.

With PB-RS, the shaped return that is assigned to following a certain policy only depends on the discounted difference of the rewards at the final step and initial step of the recorded episode \citep{grzes2017reward}.
For non potential-based shaping reward functions like $\tilde{R}_\text{FV}$ however, the shaped return depends on all intermediate rewards on the way there as well.
In that sense, FV-RS allows for shaped reward functions that propagate the reward information of intermediate states faster than PB-RS reward functions.

To use a physical analogy, PB-RS is analogous to a conservative force field, in which the potential energy of a particle only depends on its current position.
FV-RS that is not potential-based is like a non-conservative (e.g. friction) force, for which the energy of a particle is not only a function of its position, but a function of the entire past trajectory.

\subsection{Robotic Pushing Task}
\label{sec:pushing_task}

In this example, we test the efficiency of FV-RS in a simulated robotic pushing task with realistic physics.
The task is very similar to the \texttt{FetchPush-v1} environment in the OpenAI Gym \citep{brockman2016openai}, but is implemented using open-source software.
Specifically, we use the NVIDIA PhysX engine \citep{physxengine} to simulate a box of size $0.4 \times 0.4 \times 0.2$ lying on a table of size $3 \times 3$, as shown in \figref{fig:pushing_task_setup}.
\begin{figure}[h]
        \centering
        \subfloat[\label{fig:pushing_task_setup}]{%
                \includegraphics[width=0.64\linewidth]{figs/pushing_setup_annotated.png}}
        \subfloat[\label{fig:pushing_plan}]{%
                \includegraphics[width=0.36\linewidth]{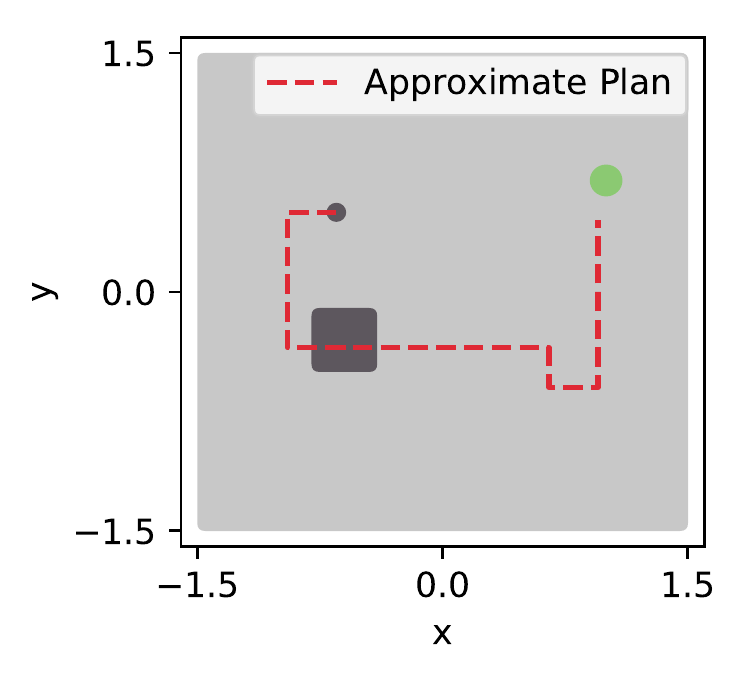}}
        \hfill
        \subfloat[\label{fig:pushing_success}]{%
                \includegraphics[width=0.5\linewidth]{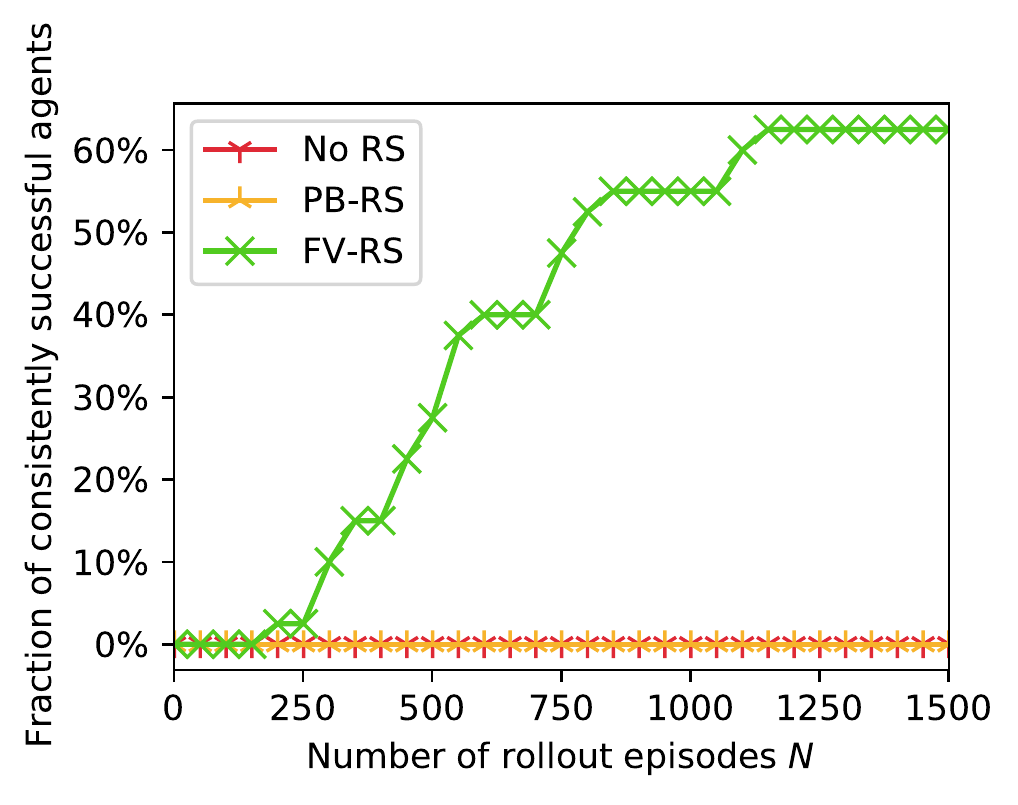}}
        \subfloat[\label{fig:pushing_training}]{%
                \includegraphics[width=0.5\linewidth]{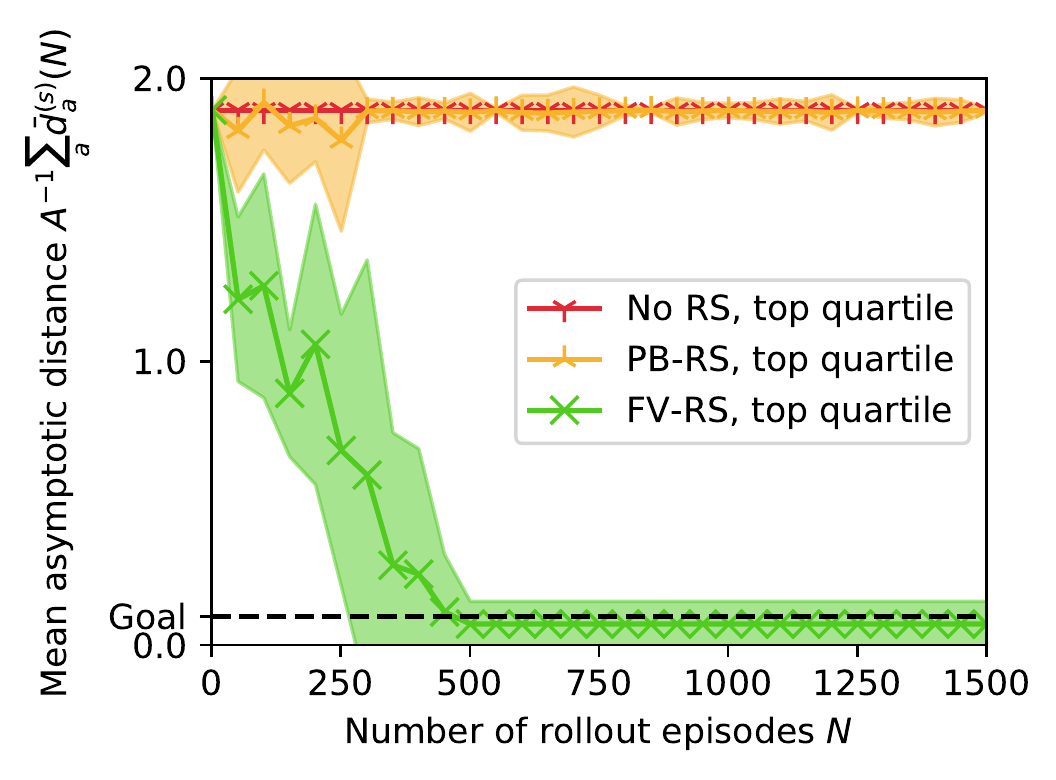}}
        \caption{Robotic pushing task.
                (a) A box of size $0.4 \times 0.4 \times 0.2$ (dark gray) lying on a table of size $3 \times 3$ (light gray) is supposed to be pushed to the green position by the spherical end effector (dark gray).
                (b) Top view of the table with 2-D projection of the planned 10-D trajectory (x and y coordinates of the end effector).
                The planned trajectory is not executable in the noisy environment.
                (c) With FV-RS, some agents are consistently successful after around $300$ training rollout episodes.
                Without reward shaping or with PB-RS, none of the agents were able to consistently reach the goal in this experiment.
                (d) Training progress of the top quartile of agents out of each category.
        }
        \label{fig:pushing_experiment}
\end{figure}

The agent controls the $3$-dimensional velocity of a spherical end effector of radius $0.06$.
The maximum velocity in any direction is $0.1$ per time step.
The actual movement of the end effector is noisy; in $10\%$ of the time steps, the end effector moves randomly.
Using quaternion representation, the state space $\sS$ has $7+3 = 10$ degrees of freedom.
The goal is to push the box within a distance of $0.1$ from the goal position shown in green, with any orientation.
Let $\sB$ be the set of states that fulfill this requirement.
The unshaped reward $R(s,a,s')$ is $1$ if $s'\in \sB$ and is $0$ elsewhere.

The plan $(p_0, p_1, ..., p_{L-1})$ is given as a sequence of states, as shown in \figref{fig:pushing_plan}.
The planned sequence has been created manually using a noise-free  model, and therefore can not be followed directly in the realistic simulation.
Instead, it is used to create a plan-based shaped reward from it.
Following the suggestion of theorem \ref{theorem:cp_plan_based}, we specifically compare the reward functions
\begin{align}
        \label{eq:pushing_cp_and_potential}
        \tilde{R}_\text{FV}(s,a,s') =
        \begin{cases}
                1 \quad  & \text{if}\ s' \in \sB \\
                \Phi_\text{FV}(s') & \text{else}
        \end{cases}
        \text{;}
        \quad
        \tilde{R}_\text{PB}(s,a,s') = R(s,a,s') + \left(\tilde{\gamma} \Phi_\text{PB}(s') - \Phi_\text{PB}(s) \right)\text{.}
\end{align}
If $\Phi_\text{FV}$ is chosen suitably following theorem \ref{theorem:cp_plan_based}, $\tilde{R}_\text{FV}$ is final-volume-preserving with respect to reward function $R$ of the original MDP.
The potential-based function $\tilde{R}_\text{PB}$ acts as the baseline.
Notice that $\Phi_\text{FV}$ and $\Phi_\text{PB}$ are chosen independently in order to facilitate a fair comparison of FV-RS and PB-RS.
We discuss and compare different choices both for $\Phi_\text{FV}$ and $\Phi_\text{PB}$ in appendix \ref{sec:phi_selection}.
There we compare different gaussian plan-based functions, similar to the ones used in \citet{brys2015reinforcement}, as well as a more simple function that only depends on the distance of the box to the target.
Our analysis reveals that using the gaussian plan-based function
\begin{align}
        \label{eq:cp_potential_function_winner}
        \Phi_\text{FV}(s) = \frac{1}{2} \frac{k(s)+1}{L} \exp\left(-\frac{d^2(s,p_{k(s)})}{2 \sigma^2} \right)
\end{align}
with $\sigma=0.2$ results in a robust performance for FV-RS.
Here, $k(s) = \argmin_i(d(p_i,s))$, and $d(\cdot,\cdot)$ is the euclidean distance ignoring the coordinates corresponding to the orientation of the box.
The value of this function increases if the agent moves further along the planned path or closer towards the planned path.
Similarly, our analysis also shows that using $\Phi_\text{PB}(s) = K \Phi_\text{FV}(s)$ with $K=50$ results in a robust performance for PB-RS.
We would like to emphasize that we did not impose that $\Phi_\text{PB}(s)$ is a multiple of $\Phi_\text{FV}(s)$, rather our analysis revealed that this is the most suitable choice. For more details, please refer to appendix \ref{sec:phi_selection}.

We apply DDPG \citep{lillicrap2015continuous} for the examples presented throughout this paper, with the exception of the example presented in appendix \ref{sec:non_ddpg_examples} using PPO \citep{schulman2017proximal}.
We use $\gamma = \tilde{\gamma} = 0.9$ as discount factor.
The agent collects data in rollout episodes of random length sampled uniformly between $0$ and $300$.
Each of these episodes starts at the same initial position indicated in \figref{fig:pushing_task_setup}; we do not assume that the system can be reset to arbitrary states.
The exploration policy acts $\epsilon$-greedily with respect to the current actor, where $\epsilon=0.2$.
After every rollout, actor and critic are updated using the replay buffer.
Both actor and critic are implemented as neural networks in Tensorflow \citep{abadi2016}.
Implementation details can be found in the supplementary code material.

The data reported in \figref{fig:pushing_experiment} is obtained as follows:
For each of the shaping strategies $s \in \{\text{No RS}, \text{PB-RS}, \text{FV-RS}\}$, multiple agents $a \in \{1,...,A\}$ are trained independently from scratch.
After $N$ rollout episodes, multiple test episodes $m \in \{1,...,M\}$ are run for $300$ time steps or until the goal is reached. $d_{am}^{(s)}(N)$ is the distance of the box to the goal at the end of such a test episode.
We classify an agent as being consistently successful once $\bar{d}_a^{(s)} = 1/M \sum_m d_{am}^{(s)}(N) < 0.1$.
We emphasize that this is a rather strict criterion.
Over $M$ test rollouts, a consistently successful agent must achieve  an \textit{average} asymptotic distance to the goal that is within the margin of $0.1$, meaning that essentially all test rollouts have to be within the margin.
We use $A=40$ and $M=30$ in all experiments reported.
The training of an agent $a$ is halted once it is consistently successful.

We report data from all agents in \figref{fig:pushing_success} and focus on the best performing quartile of agents out of each category in \figref{fig:pushing_training}.
The performance of agents is ranked by how quickly they become consistently successful, or by how close to the goal they get in case they never become consistently successful.
FV-RS helps to learn the task with significantly higher sample efficiency than PB-RS and without reward shaping.
Some agents (see \figref{fig:pushing_success}) are consistently successful after ca.\ $300$ rollout episodes (corresponding to ca.\ \num{45000} MDP transitions) using FV-RS.
In contrast, none of the agents using PB-RS or no reward shaping were able to consistently reach the goal after $1500$ training rollouts in this example.

Apart from this pushing example, we also investigated other robotic pushing or pick-and-place settings in appendix \ref{sec:robotic_man_more_examples}.
Furthermore, we compare FV-RS and PB-RS using PPO instead of DDPG in appendix \ref{sec:non_ddpg_examples}.
While DDPG is off-policy and deterministic, PPO is on-policy and stochastic.
Consistently across all experiments, we find a significant advantage of using FV-RS over PB-RS, which qualitatively confirms the findings discussed here.
Videos of this and other manipulation examples can also be found in the supplementary material.

\section{Discussion and Conclusions}
We introduced FV-RS, a subclass of reward shaping that relaxes the invariance guarantee of PB-RS to a guarantee of preserved long-time behavior.
We illustrated that since FV-RS is not necessarily potential-based, it can convey the information provided by the shaping reward in a more direct way than PB-RS.
Using FV-RS, the value of following a policy does not necessarily depend on the discounted final reward only, but can also depend on all intermediate rewards.
Thus, FV-RS is analogous to a non-conservative force (e.g. friction), and PB-RS to a conservative force (e.g. gravity).

We proposed to use FV-RS in order to increase the sample efficiency of plan-informed RL algorithms.
We demonstrated in the experiments that, compared to plan-based PB-RS, plan-based FV-RS can increase the sample efficiency of RL significantly.
This observation is consistent across multiple manipulation examples and RL algorithms.
In all manipulation examples tested, there were agents that were consistently successful after ca.\ 300 rollout episodes using FV-RS and DDPG.

An important limitation of plan-based reward shaping in general arises if the plan used for the shaping is globally wrong.
In such a case, the shaped reward would be of no use for the exploration and potentially misleading.
Here, combining FV-RS with other exploration strategies could possibly remedy this limitation.
Another limitation of the presented method is that it is currently formulated for a goal-driven sparse-reward setting.
We restricted the present paper to the discussion of this class of MDPs since it is often encountered e.g.\ in robotic manipulation tasks.
The direct application of RL still poses a major obstacle here, making reward shaping an important tool. 

Since reward shaping only modifies the reward itself, our approach makes no other assumptions about the RL algorithm used. It is orthogonal to and can be combined with other techniques used in RL, including popular methods for sparse-reward settings like Hindsight Experience Replay \citep{andrychowicz2017}.
The same holds true for methods in LfD that also exploit task information provided in the form of an approximate demonstration or plan.
Plan-based FV-RS only relies on a relatively simple representation of the approximate plan in the form of a sequence in state space.
In addition, our method does not require the system to be reset to arbitrary states, as it is the case e.g. in \citet{salimans2018learning}.

Both these aspects make it a practical choice for real-world robotic applications.
In many robotic manipulation settings, an approximate planner for the task is available, but the resulting plan can not be directly executed on the real robot.
Plan-based FV-RS could facilitate this \textit{sim-2-real transfer} in an efficient and principled way using RL.
Trading strict optimality guarantees for increased sample efficiency (while still guaranteeing preserved long-time behavior) could be beneficial in these cases.

\subsubsection*{Acknowledgments}
We thank the anonymous reviewers for their insightful suggestions and comments.
We thank the MPI for Intelligent Systems for the Max Planck Fellowship funding.

\bibliography{bibfile}
\bibliographystyle{iclr2021_conference}

\appendix

\section{Appendix}
Appendix \ref{sec:phi_selection} compares different choices of shaping functions used for PB-RS and for FV-RS.
The particular choice that was used for the experiments in this paper is also discussed and justified there. 
Appendix \ref{sec:robotic_man_more_examples} contains additional robotic manipulation examples.
These are instances of both the simulated robotic pushing task presented in \secref{sec:pushing_task} and a simulated robotic pick-and-place task presented in appendix \ref{sec:pick-and-place}.
Appendix \ref{sec:non_ddpg_examples} contains robotic pushing examples using PPO as RL algorithm instead of DDPG.
Appendix \ref{sec:cp_reward_function_proof} contains the proof of theorem \ref{theorem:cp_reward_function}.
In appendix \ref{sec:step_wise_reward_function}, a special case of theorem \ref{theorem:cp_reward_function} is discussed, additionally assuming a step-wise reward function that the agent can follow monotonically.

\subsection{Experiments on the choice of shaping functions used for pb-rs and FV-rs}
\label{sec:phi_selection}
We assume that the plan $(p_0, p_1, ..., p_{L-1})$ is given as a sequence of states.
The functions $\tilde{R}_\text{FV}$ and $\tilde{R}_\text{PB}$, and therefore $\Phi_\text{FV}$ and $\Phi_\text{PB}$, should provide a reward that is higher the closer the state of the environment is to the plan, and the further advanced it is along the plan towards the goal.
Furthermore, $\Phi_\text{FV}(s)<1\ \forall s\in \sS$ has to be fulfilled in order to ensure that $\tilde{R}_\text{FV}$ fulfills the requirements of theorem \ref{theorem:cp_plan_based}.
A canonical choice that satisfies these constraints are the Gaussian functions
\begin{align}
        \label{eq:gaussian_phi}
        \Phi^\text{Gauss}_\text{FV}(s;\sigma) = \frac{1}{2} \frac{k(s)+1}{L} \exp\left(-\frac{d^2(s,p_{k(s)})}{2 \sigma^2} \right) \text{;}  \quad \Phi^\text{Gauss}_\text{PB}(s;\sigma) = K \Phi^\text{Gauss}_\text{FV}(s;\sigma)
\end{align}
where $k(s) = \argmin_i(d(p_i,s))$, $d(\cdot,\cdot)$ is the euclidean distance ignoring the coordinates corresponding to the orientation of the box, and $K=50$ is a factor that is applied in order to render $\tilde{R}_\text{FV}$ and $\tilde{R}_\text{PB}$ comparable in scale.
$\Phi^\text{Gauss}_\text{PB}$ is similar to the shaping potential that was used for potential-based reward shaping from demonstration in \citet{brys2015reinforcement}.

As a simpler alternative, we also compare against the negative distance functions
\begin{align}
        \label{eq:distance_phi}
        \Phi^\text{Dist}_\text{FV}(s) = - d_\text{Box}(s,p_{L-1})\text{;}  \quad \Phi^\text{Dist}_\text{PB}(s) = K \Phi^\text{Dist}_\text{FV}(s) \quad \text{,}
\end{align}
where $d_\text{Box}(\cdot,p_{L-1})$ denotes the distance of the box the goal located at $p_{L-1}$.
These functions do not take the planned sequence into account, but only the intended final state, i.e.\ the goal.
It is easy to verify that using $\Phi^\text{Dist}_\text{FV}$ results in a final-volume-preserving reward function.

We compare the performance of FV-RS agents using $\Phi^\text{Dist}_\text{FV}$ or $\Phi^\text{Gauss}_\text{FV}(\cdot, \sigma)$ for different $\sigma$ against PB-RS agents using $\Phi^\text{Dist}_\text{PB}$ or $\Phi^\text{Gauss}_\text{PB}(\cdot, \sigma)$ for different $\sigma$.
We use two different robotic pushing examples for this comparison.
These are instances of the pushing example described in \secref{sec:pushing_task}, but with different initial positions and different goal positions.
We measure the asymptotic distance over different agents and rollouts as detailed in \secref{sec:pushing_task}.
The results are presented in \figref{fig:potential_function_comparison}.
\begin{figure}[h]
        \centering
        \subfloat[\label{fig:pushing_simple_plan}]{%
                \includegraphics[width=0.28\linewidth]{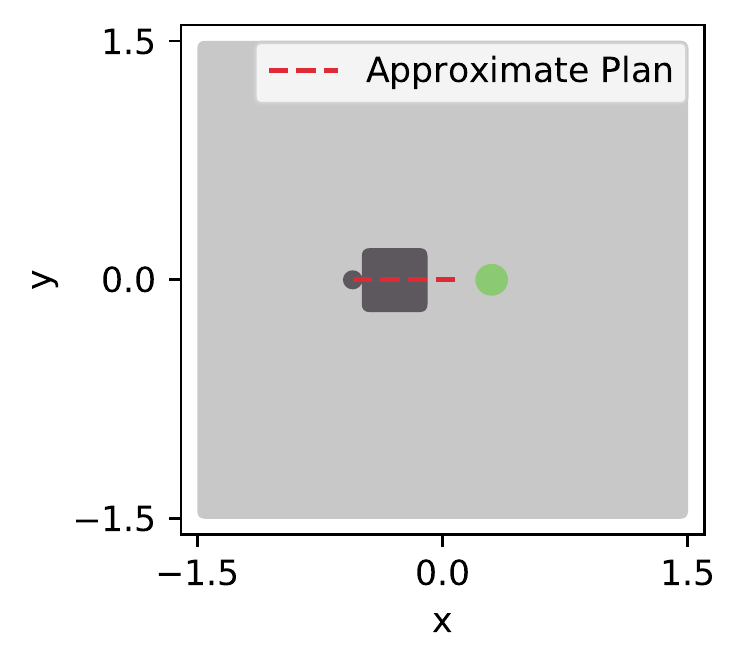}}
        \subfloat[\label{fig:pushing_simple_pb}]{%
                \includegraphics[width=0.36\linewidth]{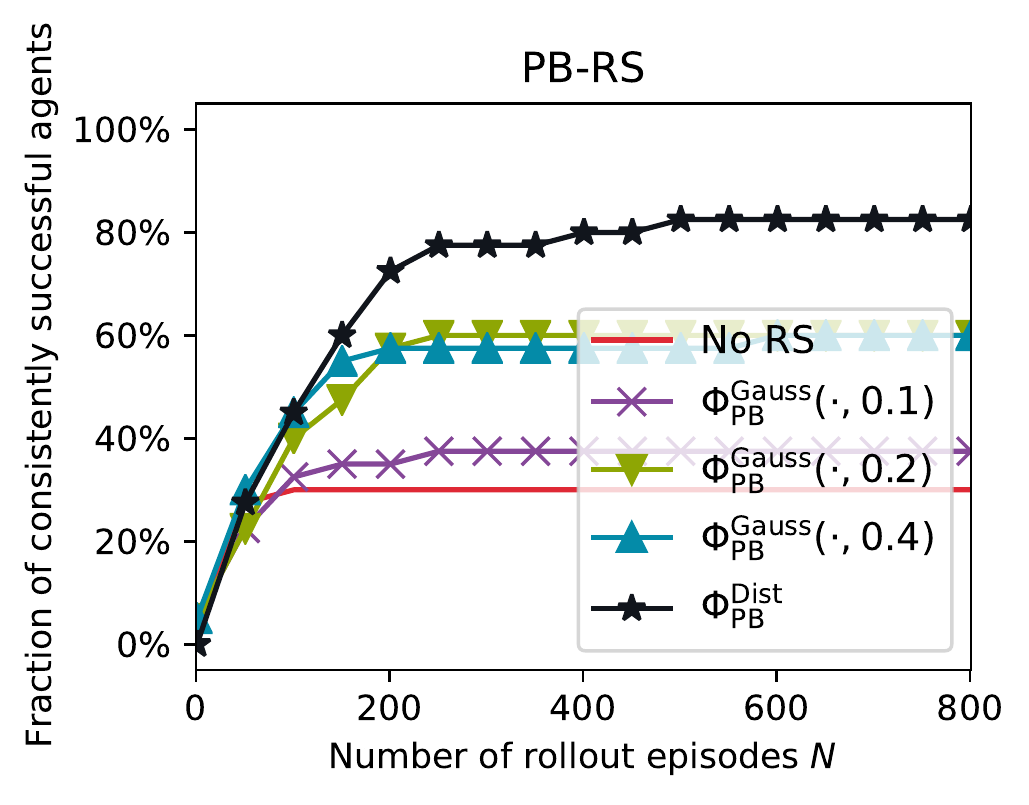}}
        \subfloat[\label{fig:pushing_simple_cp}]{%
                \includegraphics[width=0.36\linewidth]{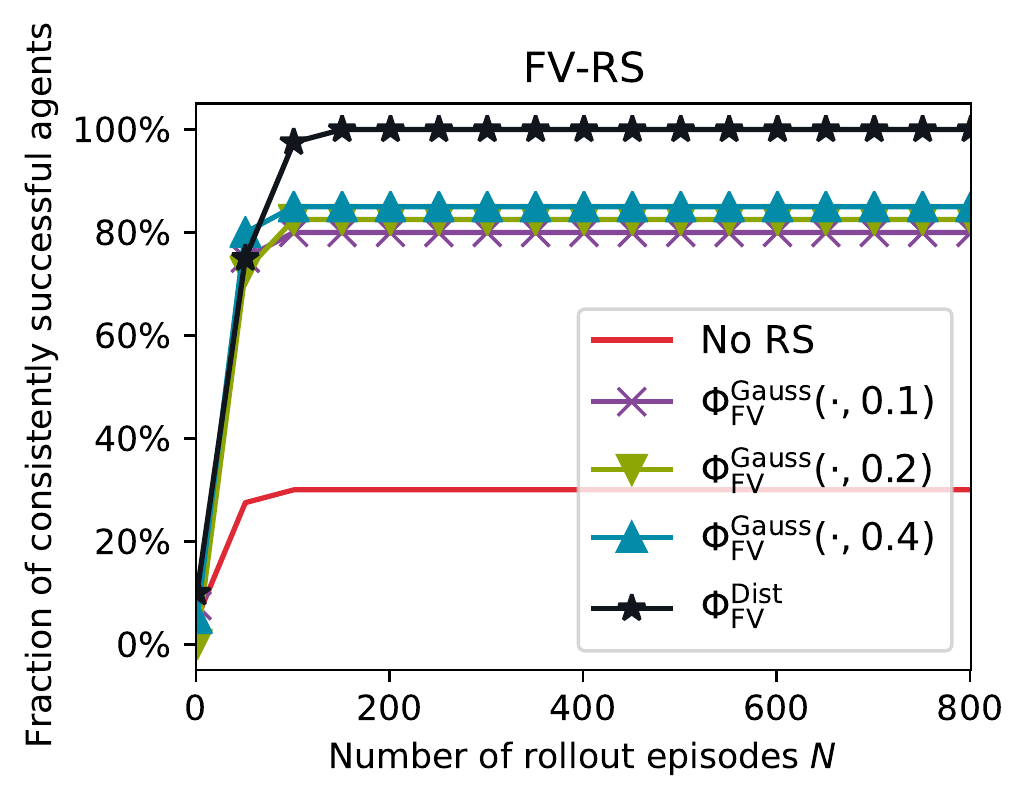}}
        \hfill
        \subfloat[\label{fig:pushing_hard_plan}]{%
        \includegraphics[width=0.28\linewidth]{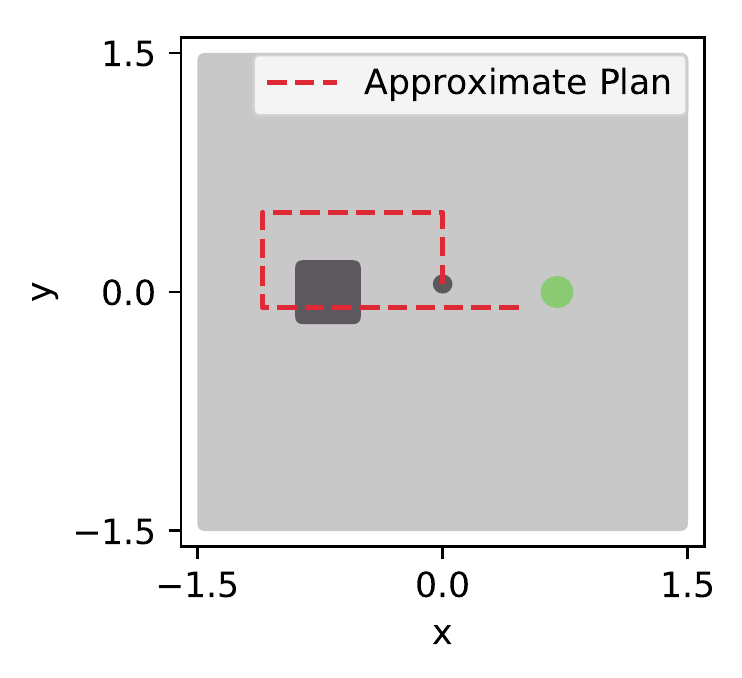}}
        \subfloat[\label{fig:pushing_hard_pb}]{%
                \includegraphics[width=0.36\linewidth]{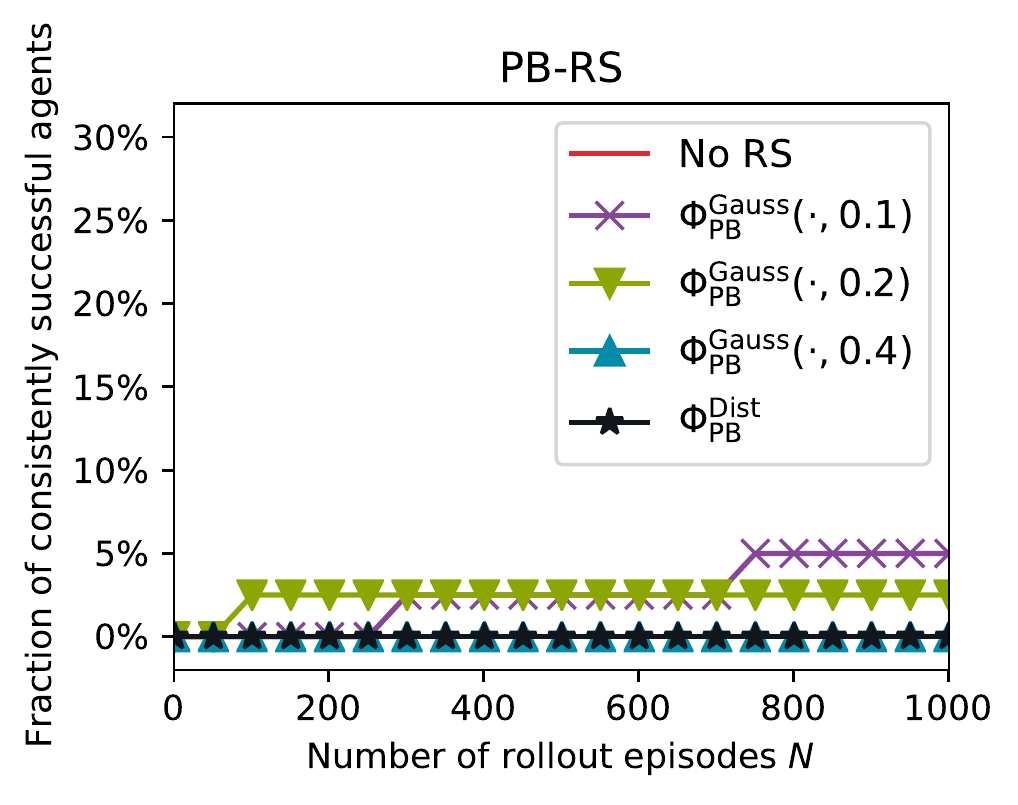}}
        \subfloat[\label{fig:pushing_hard_cp}]{%
                \includegraphics[width=0.36\linewidth]{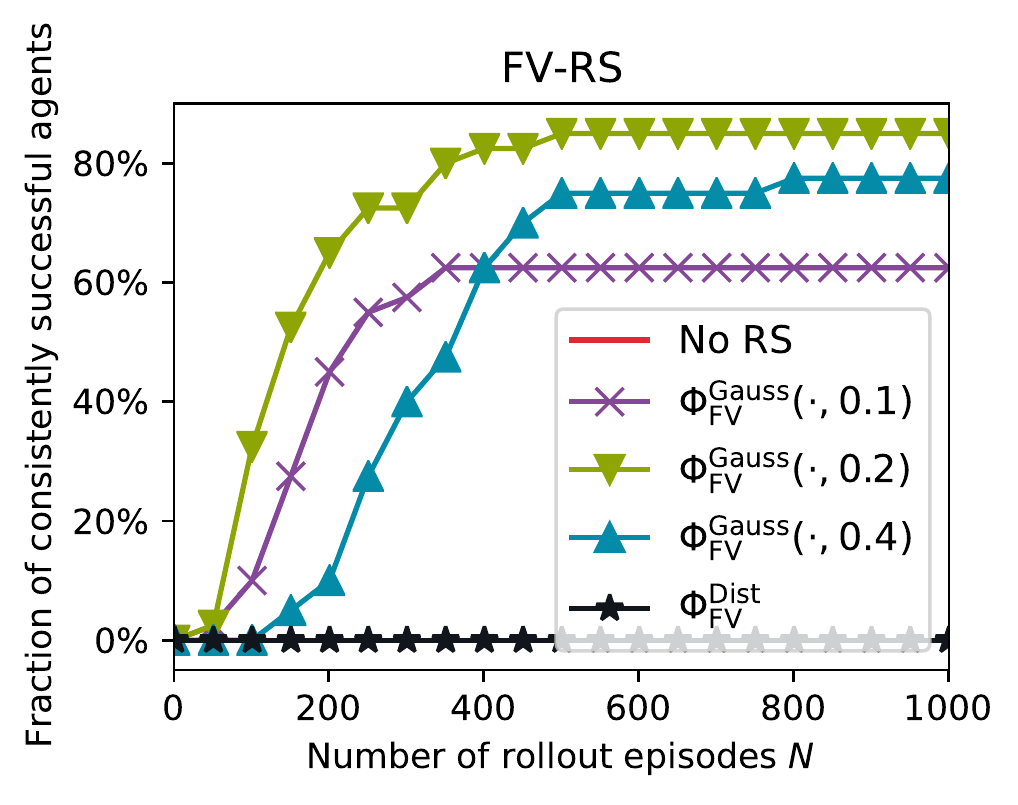}}
        \caption{
                Comparison of the performance of PB-RS and FV-RS when using different reward shaping functions as defined in \eqref{eq:gaussian_phi} and \eqref{eq:distance_phi}.
                (a) The first row shows a very simple example.
                (b) In this example, using $\Phi^\text{Dist}_\text{PB}$ shows the best result with PB-RS, followed by $\Phi^\text{Gauss}_\text{PB}(\cdot,0.2)$ or $\Phi^\text{Gauss}_\text{PB}(\cdot,0.4)$.
                (c) Similarly, $\Phi^\text{Dist}_\text{FV}$ shows the best result with FV-RS, followed by $\Phi^\text{Gauss}_\text{FV}$, which seems to be robust against different choices of $\sigma$.
                (d) The second row shows a more difficult example, in which the end effector has to move around the box before initiating the push.
                (e) Here,  using the purely distance-based potential function $\Phi^\text{Dist}_\text{PB}$ is not successful, and the best result for PB-RS is achieved with $\Phi^\text{Gauss}_\text{FV}(\cdot,0.2)$ or $\Phi^\text{Gauss}_\text{FV}(\cdot,0.1)$.
                (f) With FV-RS, a similar result is obtained, although $\Phi^\text{Gauss}_\text{FV}(\cdot,0.2)$ performs significantly better here.
        }
        \label{fig:potential_function_comparison}
\end{figure}

Interestingly, $\Phi^\text{Dist}_\text{PB}$ and $\Phi^\text{Dist}_\text{FV}$ perform best in the simple example.
However, using these functions for reward shaping does not scale well to the more difficult example.
This can be ascribed to the missing path information when using purely distance-based information for reward shaping.
On the other hand, using $\Phi^\text{Gauss}_\text{PB}(\cdot,0.2)$ for PB-RS seems to be the second-best choice for the simple example, and one of the best for the more difficult one.
The same holds true for using $\Phi^\text{Gauss}_\text{FV}(\cdot,0.2)$ for FV-RS.
Since the manipulation problems we are interested in in this paper resemble the more difficult example more than the simpler one, these experiments suggest that using $\Phi^\text{Gauss}_\text{FV}(\cdot,0.2)$ for FV-RS is the best and most robust choice.
We refer to this function as $\Phi_\text{FV}$ throughout the rest of the paper.
Furthermore, using $\Phi^\text{Gauss}_\text{PB}(\cdot,0.2)$  for the baseline method PB-RS allows for a fair comparison, since this choice has shown the best performance on the more difficult problem using FV-RS.
We refer to this function as $\Phi_\text{PB}$ throughout the rest of the paper.

On a different note, we observe that independently of the shaping function used, the performance of PB-RS decreases faster when moving to the more difficult example than the performance of FV-RS.

For the example discussed in \secref{sec:pushing_task}, plots of the function $\Phi^\text{Gauss}_\text{FV}(\cdot,0.2)$ are shown in \figref{fig:reward_shaping_phi}.
\begin{figure}
        \centering
        \subfloat[\label{fig:reward_shaping_phi_0}]{%
                \includegraphics[width=0.343\linewidth]{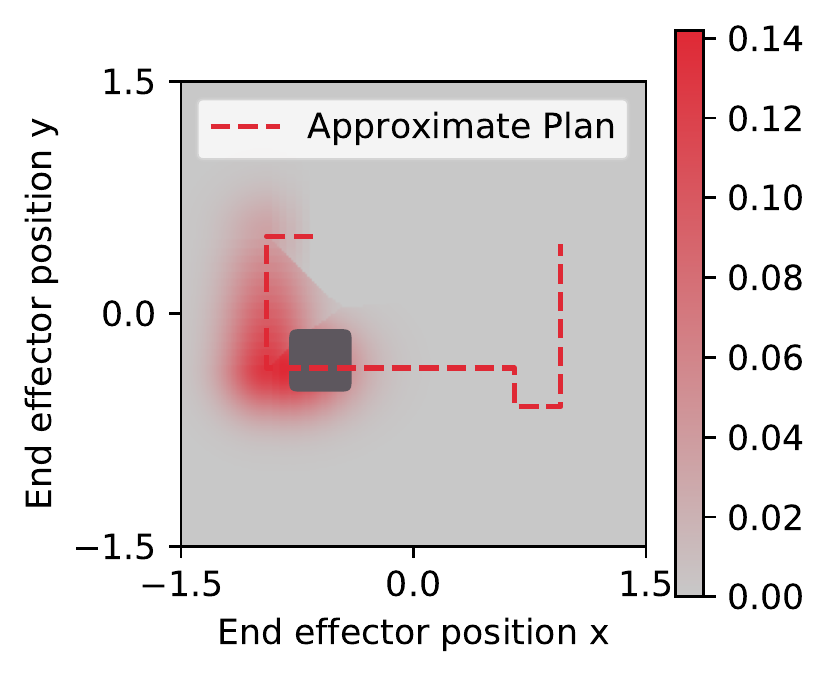}}
        \subfloat[\label{fig:reward_shaping_phi_1}]{%
                \includegraphics[width=0.323\linewidth]{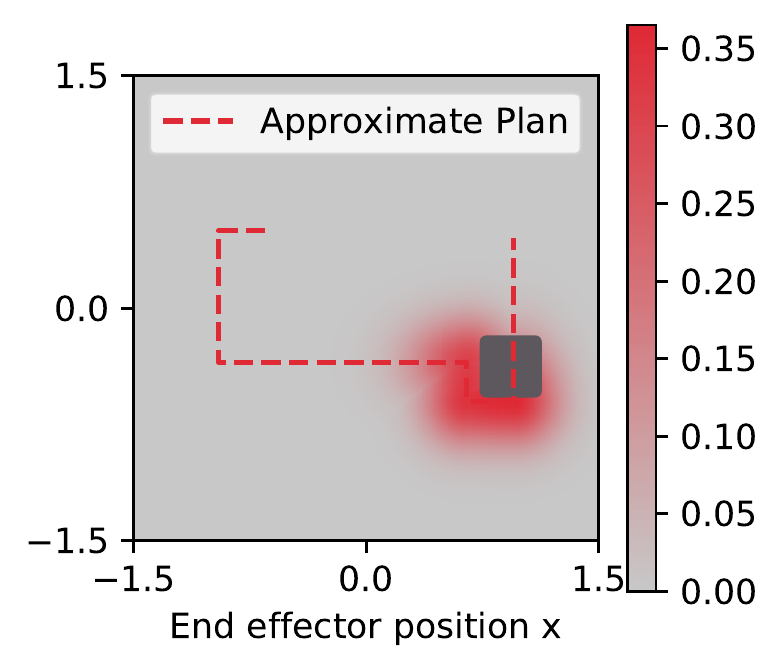}}
        \subfloat[\label{fig:reward_shaping_phi_2}]{%
                \includegraphics[width=0.333\linewidth]{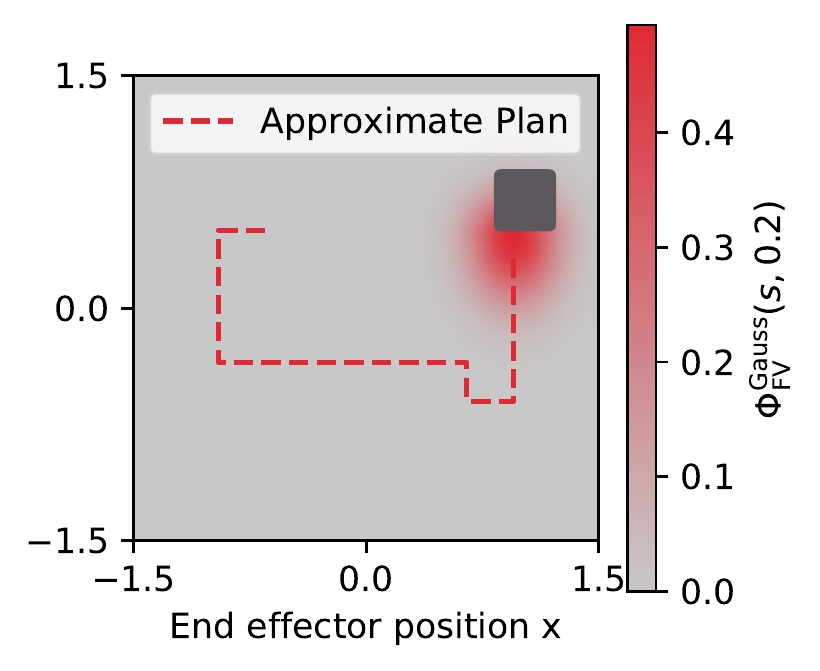}}
        \caption{
                Three different slices of the function $\Phi^\text{Gauss}_\text{FV}(\cdot,0.2)$ defined in \eqref{eq:gaussian_phi}, using the example discussed in \secref{sec:pushing_task}
                The state space is $10$-dimensional;
                the figures show the function values for fixed $7$-dimensional box position and orientation (indicated by the dark grey boxes) and fixed $z$ position of the end effector which is $z=0.14$ for all three figures.
        }
        \label{fig:reward_shaping_phi}
\end{figure}
$\Phi^\text{Gauss}_\text{FV}(\cdot,0.2)$ is larger the closer $s$ is to the approximate plan $(p_0, p_1, ..., p_{L-1})$ given as a sequence of states, and the further advanced along the plan the state $p_i$ closest to $s$ is (compare e.g. the scale of \figref{fig:reward_shaping_phi_0} to \figref{fig:reward_shaping_phi_2}).
Using the approximate plan, the reward shaping function $\tilde{R}_\text{FV}$ that is constructed using $\Phi^\text{Gauss}_\text{FV}(\cdot,0.2)$ provides a signal to the RL agent that helps guiding it towards the goal.

\subsection{Additional Robotic Manipulation Examples}
\label{sec:robotic_man_more_examples}
The robotic manipulation examples presented in the following are instances of the simulated robotic pushing task presented in \secref{sec:pushing_task} and a simulated robotic pick-and-place task presented in appendix \ref{sec:pick-and-place}.
The results are summarized in \figref{fig:pushing_more_experiments} and \figref{fig:pnp_more_experiments}, respectively.

Qualitatively, the findings discussed in \secref{sec:pushing_task} are confirmed across all examples.
In both the pushing and the pick-and-place setting, FV-RS helps to learn the manipulation task with significantly higher sample efficiency than PB-RS.
In all examples, there were agents that learned to consistently fulfill the task after ca.\ 300 rollout episodes using FV-RS, which corresponds to \num{45000} MDP transitions.

The first pushing example  of \figref{fig:pushing_more_experiments} can be seen as an easier version of the second example.
Interestingly, no agent is able to consistently succeed at the harder task with PB-RS, while one agents out of $A=40$ is able to consistently succeed at the easier one with PB-RS.
In contrast to that, the majority of agents learns to consistently fulfill both tasks after ca.\ 300 training rollouts in both examples using FV-RS.

A video of all experiments is included in the supplementary material.

\begin{figure}[h]
        \centering
        \subfloat[\label{fig:pushing_004_plan}]{%
                \includegraphics[width=0.28\linewidth]{figs/pushing_plan_004.pdf}}
        \subfloat[\label{fig:pushing_004_success}]{%
                \includegraphics[width=0.36\linewidth]{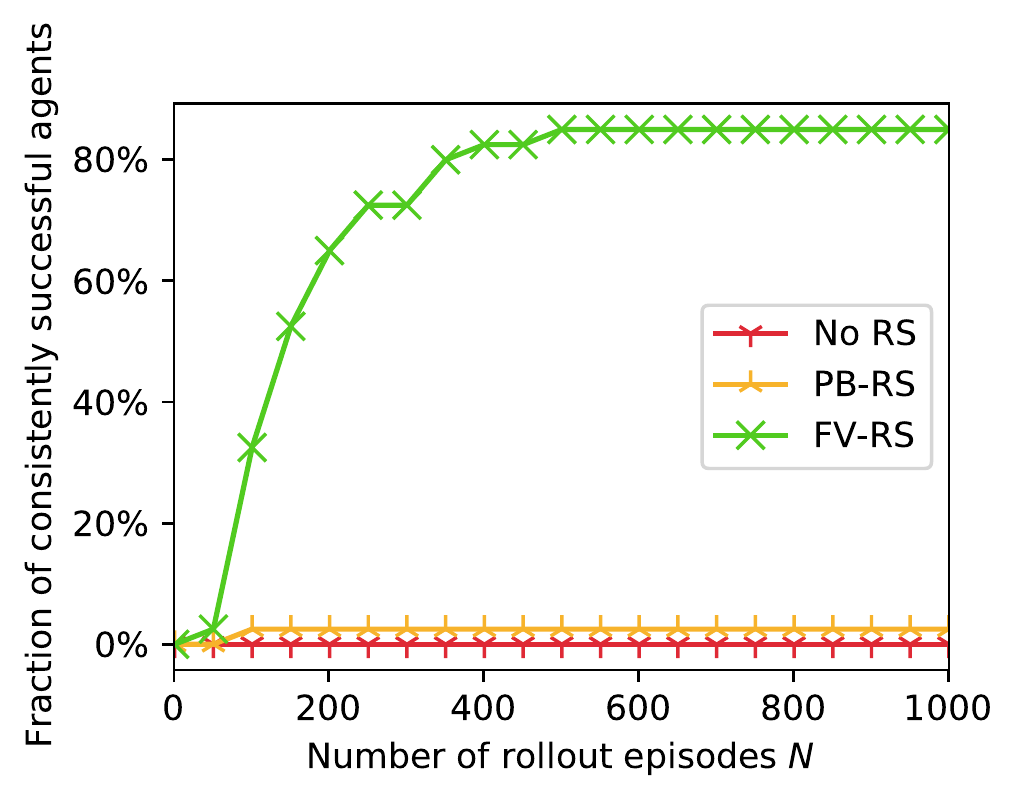}}
        \subfloat[\label{fig:pushing_004_training}]{%
                \includegraphics[width=0.36\linewidth]{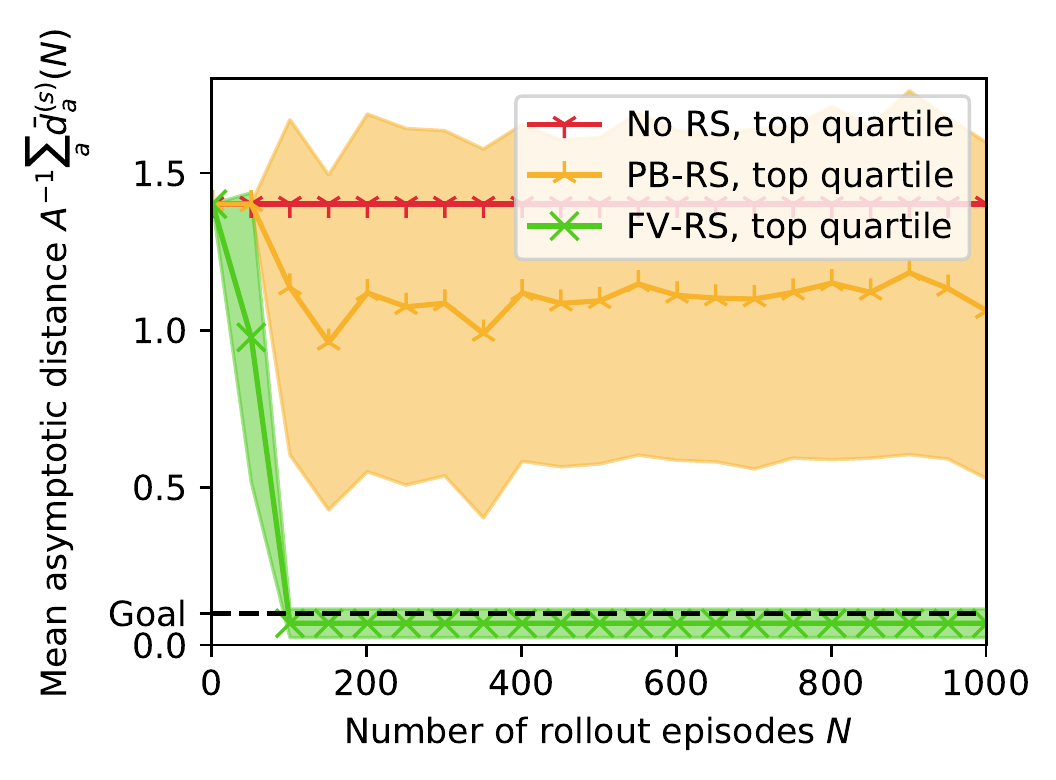}}
        \hfill
        \subfloat[\label{fig:pushing_002_plan}]{%
                \includegraphics[width=0.28\linewidth]{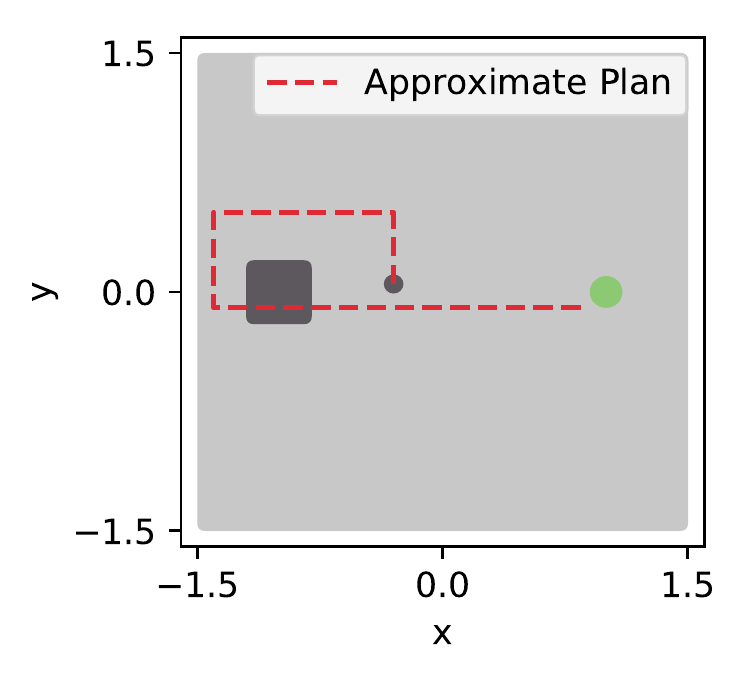}}
        \subfloat[\label{fig:pushing_002_success}]{%
                \includegraphics[width=0.36\linewidth]{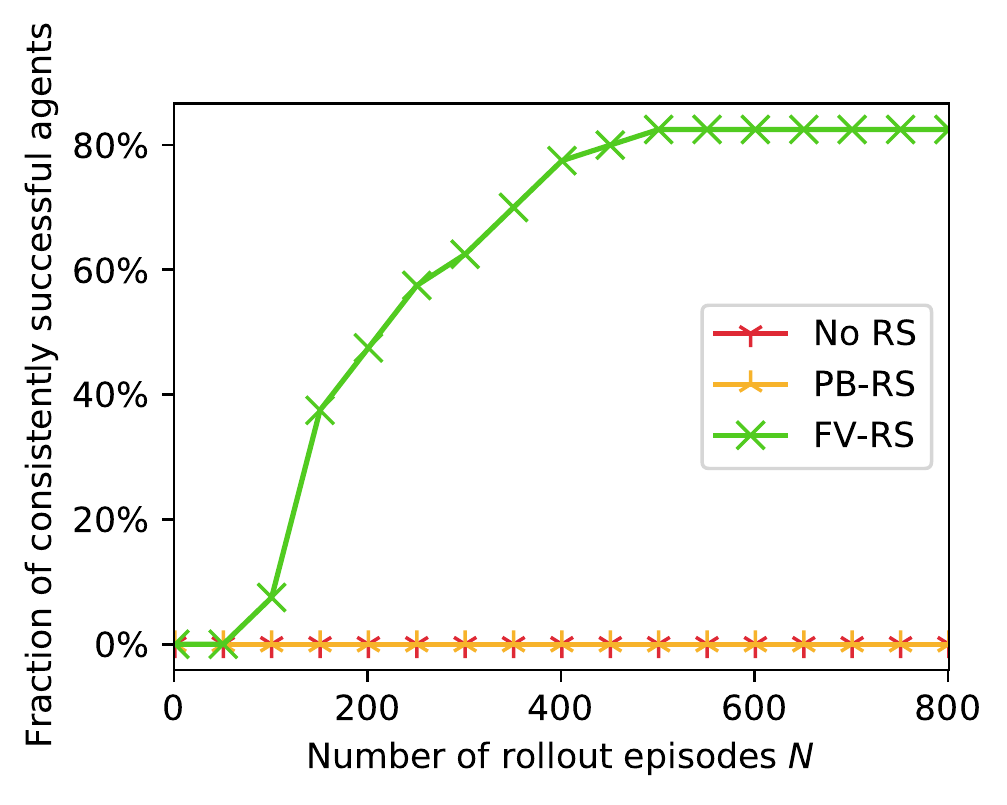}}
        \subfloat[\label{fig:pushing_002_training}]{%
                \includegraphics[width=0.36\linewidth]{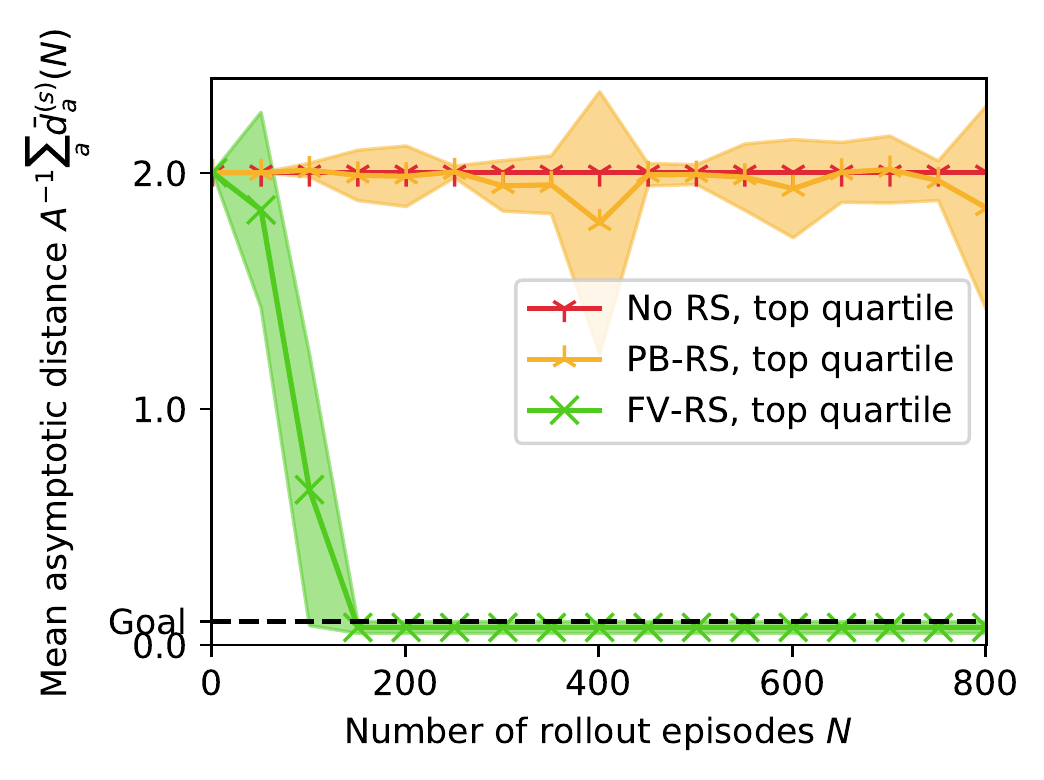}}
        \hfill
        \subfloat[\label{fig:pushing_003_plan}]{%
                \includegraphics[width=0.28\linewidth]{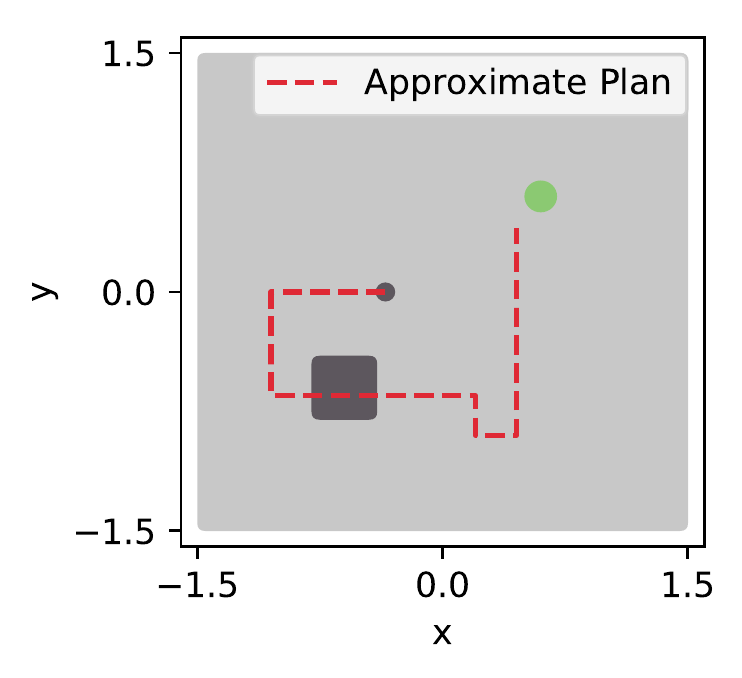}}
        \subfloat[\label{fig:pushing_003_success}]{%
                \includegraphics[width=0.36\linewidth]{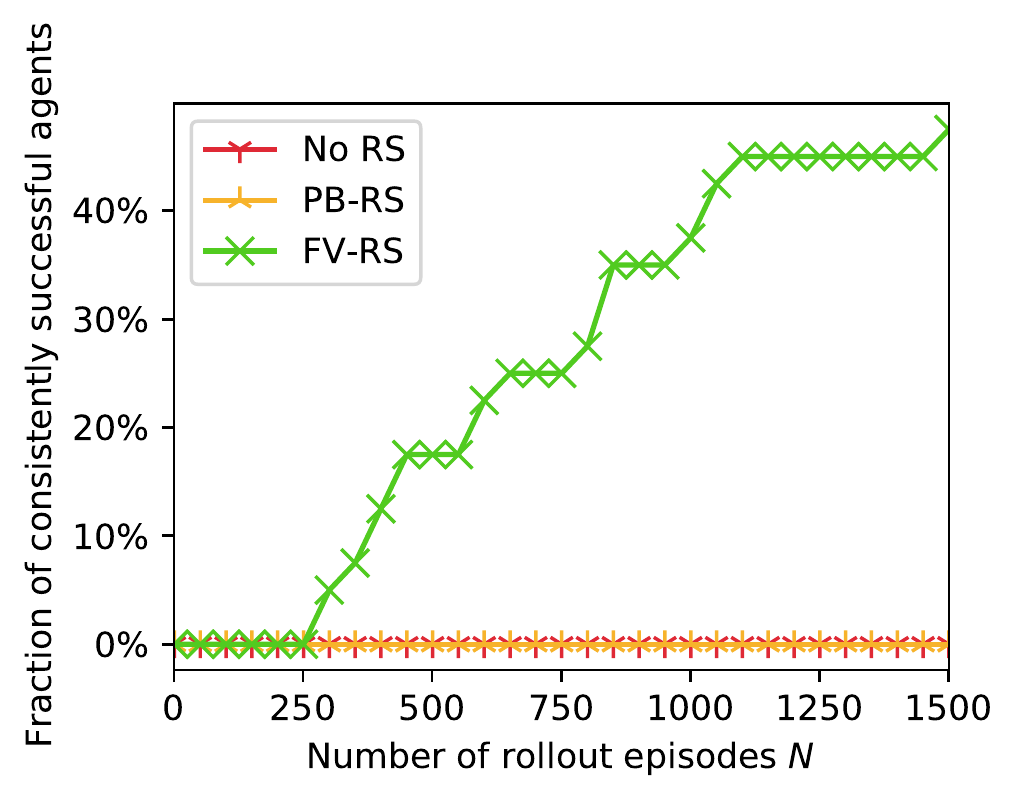}}
        \subfloat[\label{fig:pushing_003_training}]{%
                \includegraphics[width=0.36\linewidth]{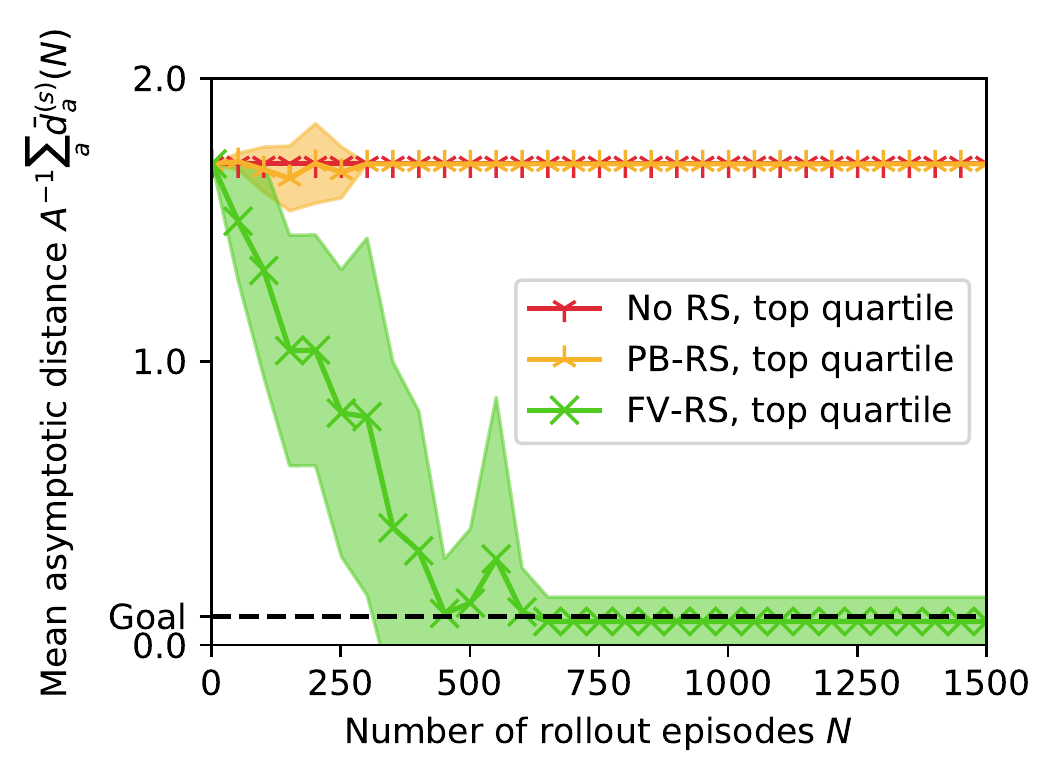}}
        \caption{
                Additional pushing examples.
                Each row corresponds to a different example.
                With FV-RS, some agents are consistently successful after ca.\ 300 rollout episodes in all cases.
                With PB-RS, there is one agents in the first example that learns to be consistently successful within the $1000$ rollout episodes the experiment was run for.
                Across all examples, the training progress of the top quartile of agents is significantly faster with FV-RS than with PB-RS or without RS.
                Different evaluations from the first example were also shown in \figref{fig:potential_function_comparison}
        }
        \label{fig:pushing_more_experiments}
\end{figure}

\begin{figure}[h]
        \centering
        \subfloat[\label{fig:pick_and_place_003_plan}]{%
                \includegraphics[width=0.28\linewidth]{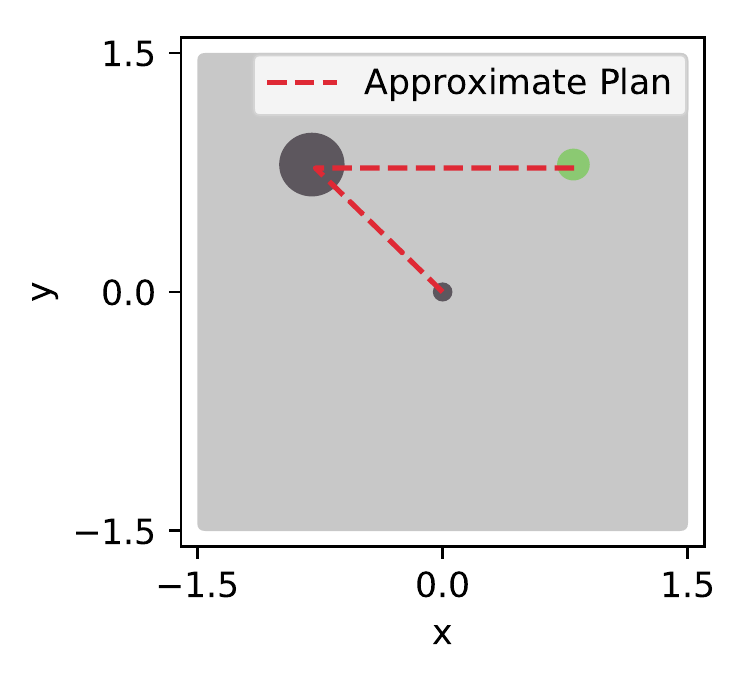}}
        \subfloat[\label{fig:pick_and_place_003_success}]{%
                \includegraphics[width=0.36\linewidth]{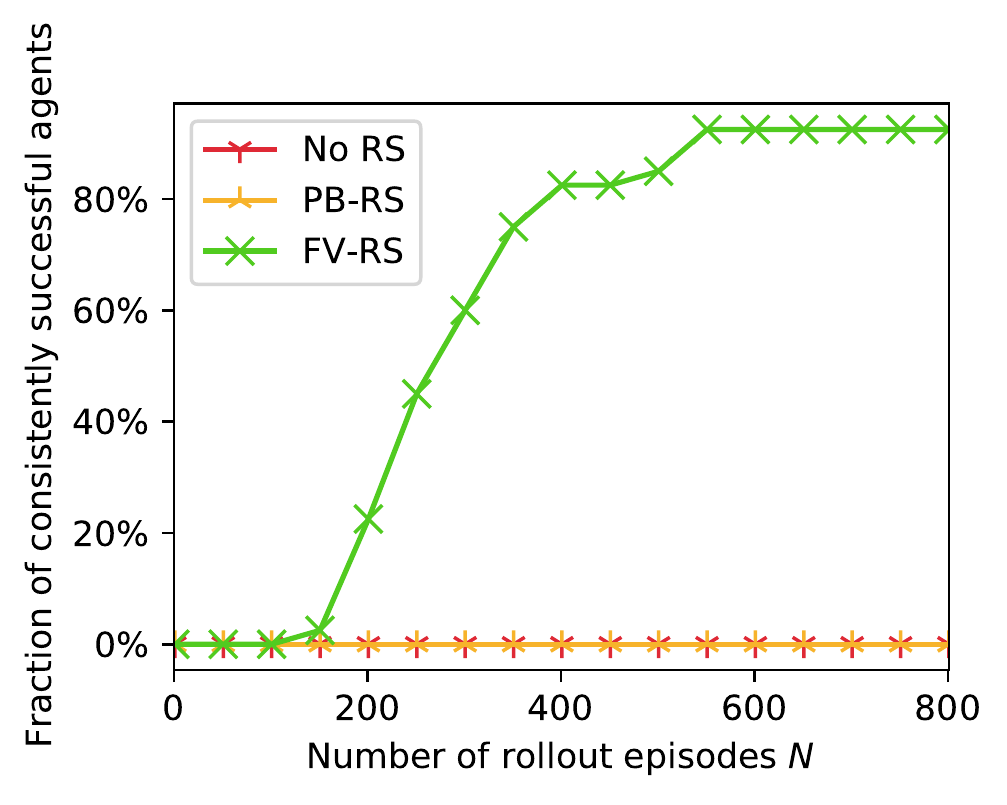}}
        \subfloat[\label{fig:pick_and_place_003_training}]{%
                \includegraphics[width=0.36\linewidth]{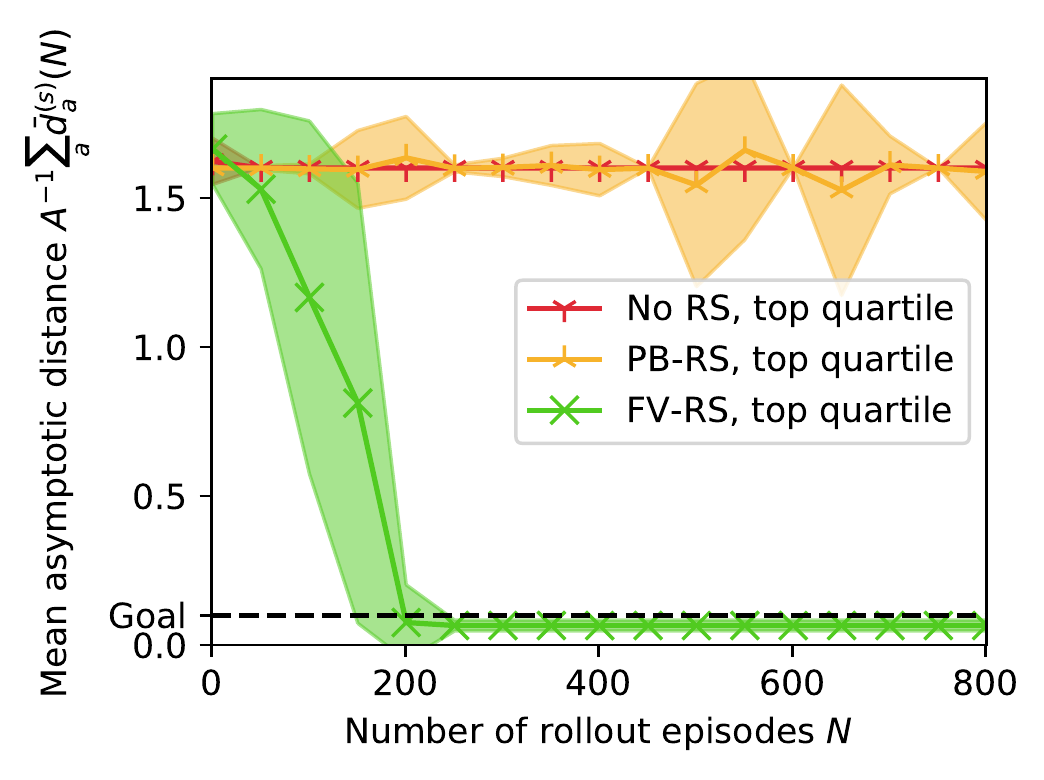}}
        \hfill
        \subfloat[\label{fig:pick_and_place_plan}]{%
        \includegraphics[width=0.28\linewidth]{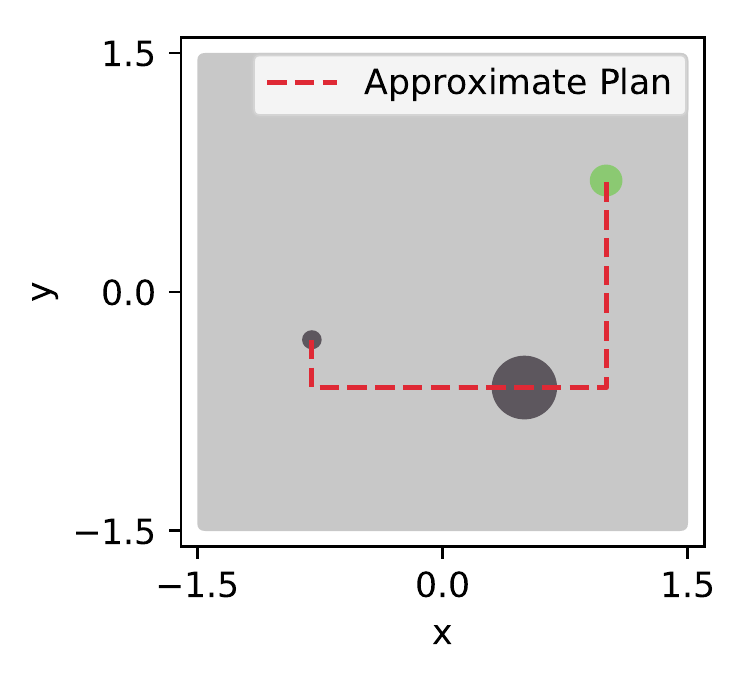}}
        \subfloat[\label{fig:pick_and_place_success}]{%
                \includegraphics[width=0.36\linewidth]{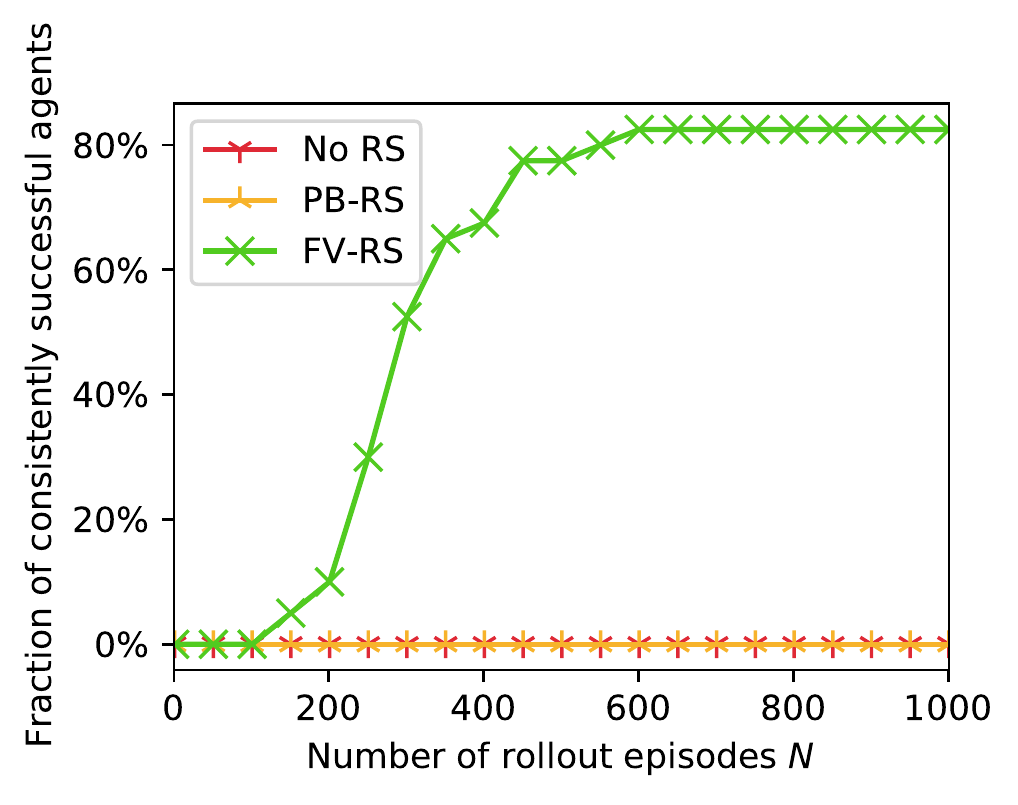}}
        \subfloat[\label{fig:pick_and_place_training}]{%
                \includegraphics[width=0.36\linewidth]{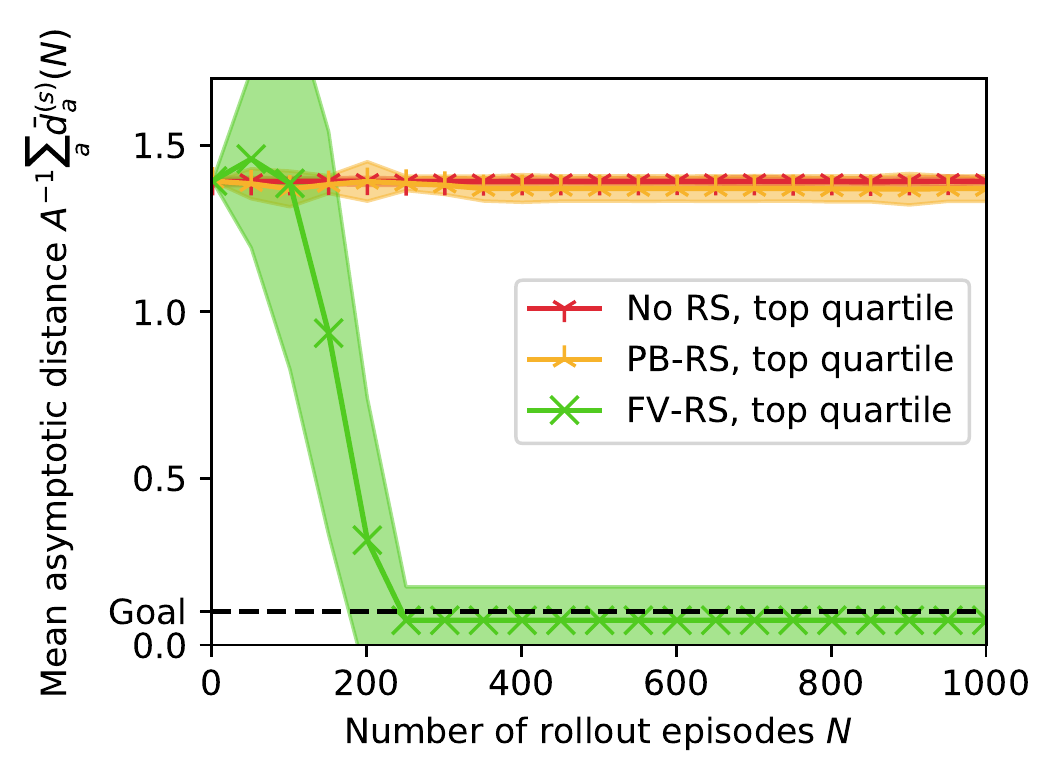}}
        \caption{
                Pick-and-place examples.
                Each row corresponds to a different example.
                With FV-RS, some agents are consistently successful after ca.\ 300 rollout episodes in both cases.
                With PB-RS or without reward shaping, none of the agents were able to consistently reach the goal in these experiments.
                The training progress of the top quartile of agents is significantly faster with FV-RS than with PB-RS or without reward shaping.}
        \label{fig:pnp_more_experiments}
\end{figure}

\subsection{Robotic Pick-and-Place Task}
\label{sec:pick-and-place}
For the robotic pick-and-place task, we use the NVIDIA PhysX engine \citep{physxengine} to simulate a disk of radius $0.2$ lying on a table of size $3 \times 3$, as shown in \figref{fig:pick_and_place_setup}.
\begin{figure}[h]
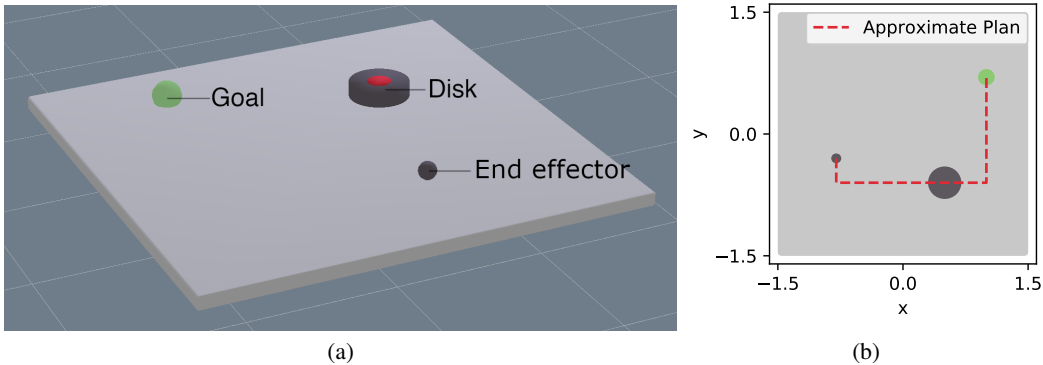

        \centering
        \subfloat[\label{fig:pick_and_place_setup}]{%
                \includegraphics[width=0.64\linewidth]{figs/pick_and_place_setup_annotated.png}}
        \subfloat[\label{fig:pick_and_place_plan}]{%
                \includegraphics[width=0.36\linewidth]{figs/pick_and_place_plan.pdf}}
        \caption{
                Robotic pick-and-place task.
                (a) A disk of radius $0.2$ (dark gray) lying on a table of size $3 \times 3$ (light gray) is supposed to be picked up in the red area and placed at the green position by the spherical end effector (dark gray).
                (b) Top view of the table with 2-D projection of a planned trajectory (x and y coordinates of the end effector).
                The planned trajectory is not executable in the noisy environment.
        }
        \label{fig:pick_and_place_experiment}
\end{figure}

We assume that the grasp is automatically established if the end effector touches the disk in the red area.
Action and state space are the same as in \secref{sec:pushing_task}, apart from the information whether a grasp has been established or not, which is appended to the state of the system (binary signal).
Just as in \secref{sec:pushing_task}, the movement of the end effector is noisy. The goal is to place the disk within a distance of $0.1$ from the goal position, shown in green in in \figref{fig:pick_and_place_setup}. Let $\sB$ be the set of states that fulfill this requirement. The unshaped reward $R(s,a,s')$ is $1$ if $s'\in\sB$ and is $0$ elsewhere. We choose $\gamma=\tilde{\gamma}=0.9$

Again, the plan (see \figref{fig:pick_and_place_plan}) is given as a sequence of states $(p_0, p_1, ..., p_{L-1})$, and we construct the potential-based shaped reward $\tilde{R}_\text{PB}$ and the final-volume-preserving reward $\tilde{R}_\text{FV}$ analogously to \eqref{eq:pushing_cp_and_potential} and \eqref{eq:cp_potential_function_winner}.
This is motivated by the analysis done in appendix \ref{sec:phi_selection}, combined with the observation that the scale and complexity pick-and-place example is comparable to the scale and complexity of the pushing example.
In contrast to \eqref{eq:cp_potential_function_winner} however, the euclidean distance $d(\cdot, \cdot)$ also takes into account the binary information on whether a grasp has been established or not.
For PB-RS, we scale the potential function by $K=30$.
This is done in order to ensure that the reward functions of PB-RS and FV-RS are of similar scale.
The results for $2$ different pick-and-place examples are discussed in appendix \ref{sec:robotic_man_more_examples}, specifically in \figref{fig:pnp_more_experiments}.
Here, we measure the asymptotic distance over different agents and rollouts as detailed in \secref{sec:pushing_task}.

\subsection{Robotic Pushing Examples using PPO}
\label{sec:non_ddpg_examples}
We repeat the experiments presented in appendix \ref{sec:phi_selection} using PPO instead of DDPG as RL algorithm.
In DDPG, a deterministic policy is learned from off-policy data.
In contrast to this, PPO is an on-policy algorithm for learning a stochastic policy.

We use an open-source implementation from \citet{barhate2020ppopytorch}.
The actor network outputs mean values between $-0.1$ and $0.1$ for the Gaussian policy, while the standard deviation of the policy output is fixed to $0.015$.
We use the clipping parameter $\epsilon=0.1$ (see \citet{schulman2017proximal}).
For more implementation details, please also refer to the supplementary material.

Since the environment is unchanged, we use the shaping functions defined in \eqref{eq:pushing_cp_and_potential} and \eqref{eq:cp_potential_function_winner}.
The results are presented in \figref{fig:ppo_experiments}.
\begin{figure}[h]
        \centering
        \subfloat[\label{fig:pushing_005_ppo_plan}]{%
                \includegraphics[width=0.28\linewidth]{figs/pushing_plan_005.pdf}}
        \subfloat[\label{fig:pushing_005_ppo_success}]{%
                \includegraphics[width=0.36\linewidth]{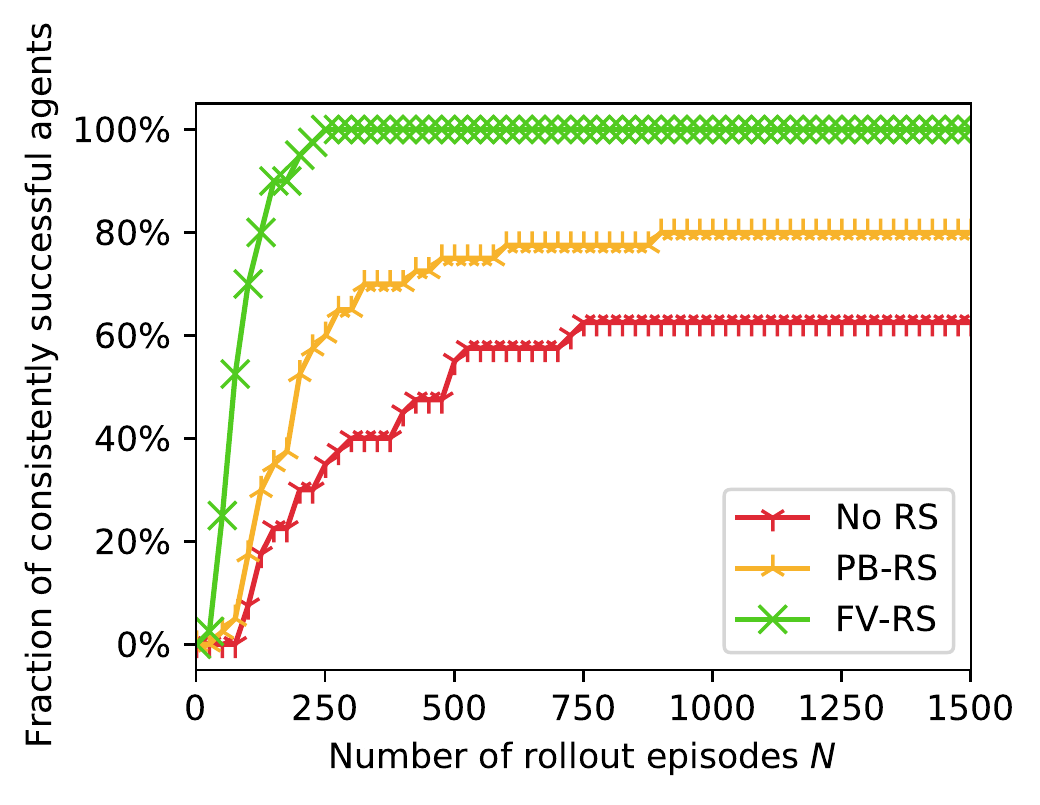}}
        \subfloat[\label{fig:pushing_005_ppo_top_quartile}]{%
                \includegraphics[width=0.36\linewidth]{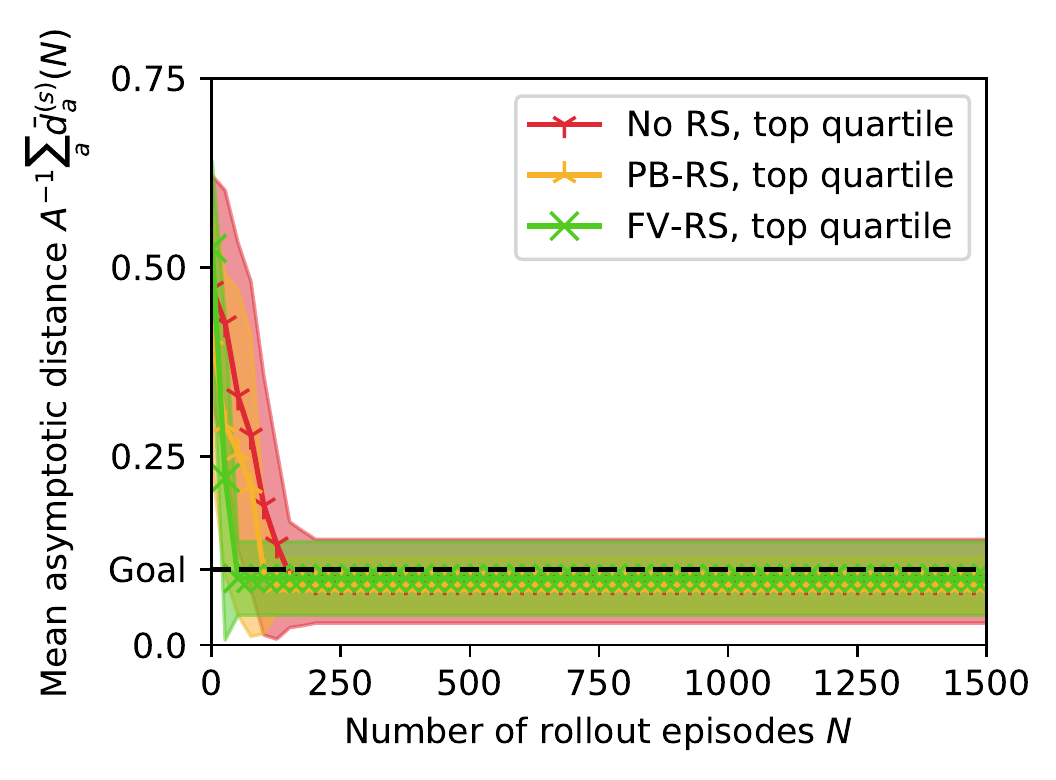}}
        \hfill
        \subfloat[\label{fig:pushing_004_ppo_plan}]{%
        \includegraphics[width=0.28\linewidth]{figs/pushing_plan_004.pdf}}
        \subfloat[\label{fig:pushing_004_ppo_success}]{%
                \includegraphics[width=0.36\linewidth]{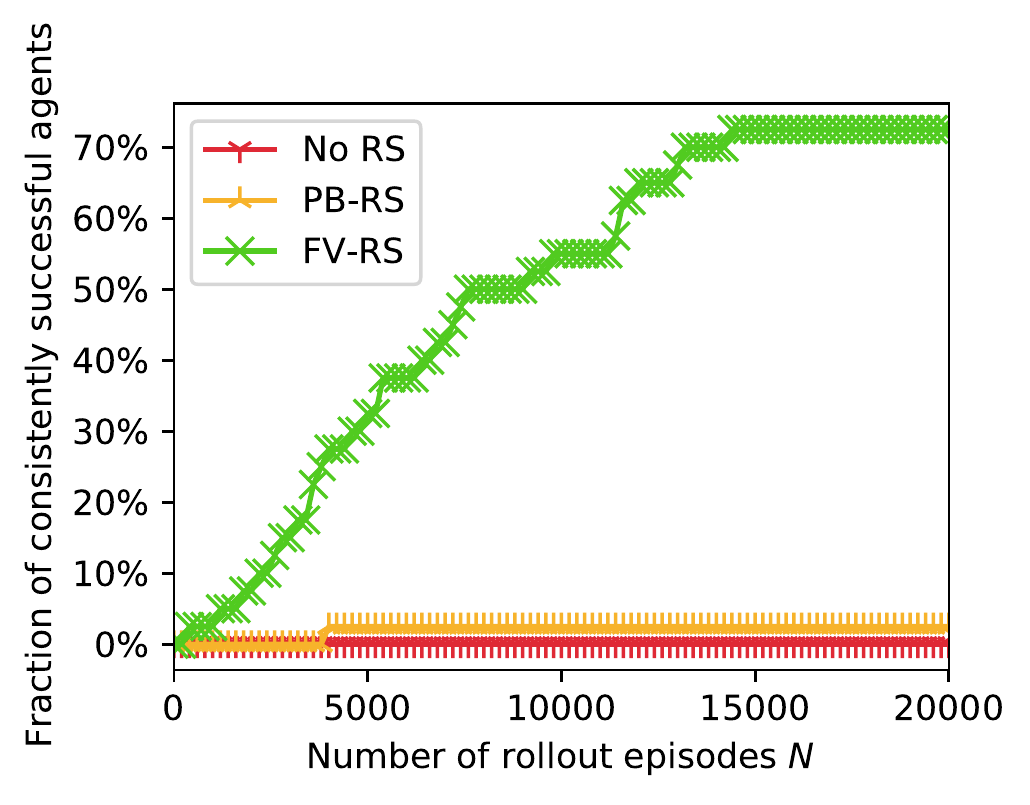}}
        \subfloat[\label{fig:pushing_004_ppo_top_quartile}]{%
                \includegraphics[width=0.36\linewidth]{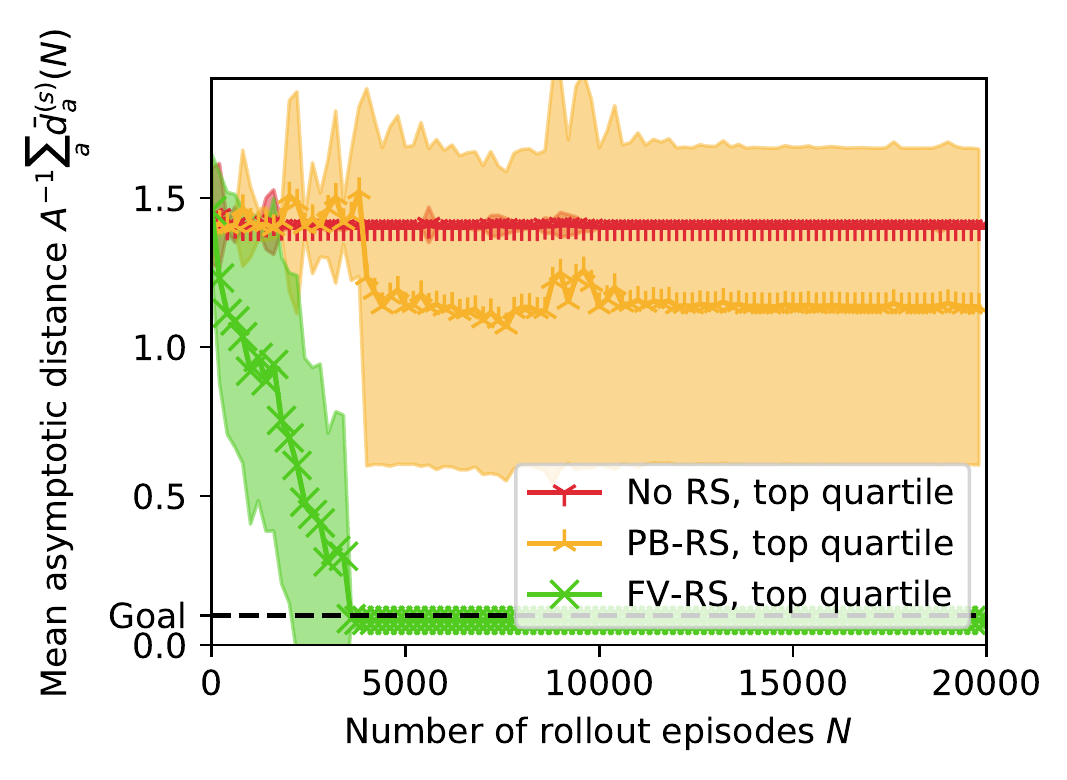}}
        \caption{
                Pushing experiments using PPO instead of DDPG as the RL algorithm.
                (a, b, c) On the simpler example, FV-RS only slightly outperforms PB-RS.
                (d, e, f) This difference increases however on the more difficult example.
                This relative advantage of using FV-RS over PB-RS using PPO is very similar to the one reported in \figref{fig:potential_function_comparison} for DDPG.
        }
        \label{fig:ppo_experiments}
\end{figure}
The relative advantage of using FV-RS over PB-RS with PPO is very similar to the one reported in \figref{fig:potential_function_comparison}, where results for the same examples using DDPG are reported.
This relative advantage is relatively small on the simpler example, but increases drastically for the more difficult example.

As a side note, the absolute number of samples needed by an agent to consistently learn the task is typically much higher using the on-policy PPO algorithm.
This is especially true for the more difficult example.
Another interesting observation is that without any reward shaping (No RS), the stochastic PPO policy performs better than the deterministic DDPG policy in \figref{fig:potential_function_comparison}.
This is however not the case in the more difficult example.

\subsection{Proof of Theorem \ref{theorem:cp_reward_function}}
\label{sec:cp_reward_function_proof}
We adopt notation from \secref{sec:cp_rs}.
The largest reward $1$ can only be collected if $s'$ is within $\sB$.
Since $M$ has the optimal final volume $\sG \subseteq \sB$ and $M$ and $\tilde{M}$ have the same dynamics $T$, there exists a policy (the optimal policy of $M$) to drive the state of $\tilde{M}$ to $\sG \subseteq \sB$ in finite time $t_0$.
The only question that remains is whether it is suboptimal for a policy to drive the state of $\tilde{M}$ to $\sG \subseteq \sB$ if this takes many steps, and whether it might be optimal to stay at a local maximum of $\tilde{R}(s,a,s')$.

The maximum return that can be collected outside of $\sB$ is $\Delta/(1-\tilde{\gamma})$.
On the other hand, directly navigating to $\sG \subseteq \sB$ returns at least $\tilde{\gamma}^{t_0-1}/(1-\tilde{\gamma})$.
For every $0<\Delta<1$, we can therefore find a $0<\tilde{\gamma}<1$ such that directly navigating to $\sG \subseteq \sB$ is favorable under reward $\tilde{R}(s,a,s')$ and discount $\tilde{\gamma}$ as well.
Once the state is within $\sG$, the optimal policy of $M$ can be applied to keep the state within $\sG$.
Again, this is feasible since $M$ has the optimal final volume $\sG$. Thus, $\tilde{M}\subseteq M$.

\subsection{Step-wise Reward Function}
\label{sec:step_wise_reward_function}
We adopt notation from \secref{sec:cp_rs}.
We discuss a possible relaxation of the lower bound on $\tilde{\gamma}$ given in corollary \ref{cor:gamma_lower_bound} in case more assumptions are made about $\tilde{R}$.
Specifically, we consider the case of a step-wise reward function that the agent can follow monotonically.
\begin{definition}
        \label{def:stepwise_reward_functions}
        Let a reward function $\tilde{R}$ fulfill
        \begin{align}
                 & \forall s, a, s' \in \sB: \quad R(s,a,s') = 1                                                \\
                 & \forall s, a, s' \notin \sB: \quad R(s,a,s') \in \{\Delta, \Delta^2, \dots \} \quad \text{,}
        \end{align}
        where $\sB\subseteq \sS$ is non-empty.
        For $w \in \sN \setminus \{0\}$, we call an MDP $M$ \textbf{monotonically $(w,\Delta)$ controllable with respect to $\tilde{R}$}, iff there exists a policy $\pi$ and $0 < \Delta < 1$ such that
        \begin{align}
                \forall s_0 \notin \sB, \quad \forall k \geq w: \quad
                P(\Delta \cdot R(s_{k},a_k,s_{k+1}) \geq R(s_0,a_0,s_1)) = 1 \quad \text{.}
        \end{align}
        where actions and states follow the distribution
        \begin{align}
                q_{\pi}(a_0,s_1,a_1,...|s_0) = \prod_{i=0}^{\infty}  P(s_{t+1}|s_t,a_t) \pi(a_t|s_t) \quad \text{.}
        \end{align}
        We call $w$ $M$'s \textbf{step width} and we call $\Delta$ $M$'s \textbf{step height}.
\end{definition}
In words, we additionally require that given the dynamics of $M$, the reward function must be defined in a way that guarantees that the agent can advance to the next "step" within $w$ time steps.
We acknowledge that constructing such a reward function might not always be feasible, but it allows us to illustrate the relationship between the lower bound on $\tilde{\gamma}$ and the step width $w$.
The following theorem states this relationship.

\begin{theorem}
        \label{theorem:step-wise-mdp}
        Let $M$ be an MDP with metric state-space $(\sS,d)$ and with the sparse reward function
        \begin{align}
                R(s,a,s') =
                \begin{cases}
                        1 \quad & \text{if}\ s' \in \sB \\
                        0       & \text{else}
                \end{cases}
                \quad \text{,}
        \end{align}
        where $\sB\subseteq \sS$ is non-empty. Let $M$ have the optimal final volume $\sG \subseteq \sB$. Let $\tilde{R}(s,a,s')$ be a reward function that fulfills
        \begin{align}
                 & \forall s, a, s' \in \sB: \quad R(s,a,s') = 1                                                \\
                 & \forall s, a, s' \notin \sB: \quad R(s,a,s') \in \{\Delta, \Delta^2, \dots \} \quad \text{,}
        \end{align}
        where $0<\Delta<1$. Additionally, let $M$ be monotonically $(w,\Delta)$ controllable with respect to $\tilde{R}$. Then, for every $0<\Delta<1$ there exists a discount factor $0<\tilde{\gamma}<1$ such that the MDP $\tilde{M}$ corresponding to $\tilde{R}$ and $\tilde{\gamma}$ is compatible to $M$.
\end{theorem}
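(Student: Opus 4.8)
The plan is to reuse the structure of the proof of theorem \ref{theorem:cp_reward_function} and to sharpen the bound of corollary \ref{cor:gamma_lower_bound} by exploiting the additional step-wise structure guaranteed by definition \ref{def:stepwise_reward_functions}. As in that proof, the maximal reward $1$ can only be obtained inside $\sB$; since $M$ has optimal final volume $\sG \subseteq \sB$ and $\tilde{M}$ shares the dynamics $T$, the optimal policy of $M$ can still drive $\tilde{M}$ into $\sG$ in finite time and then keep it there. The only thing left to rule out is that it becomes optimal in $\tilde{M}$ to loiter at some local maximum of $\tilde{R}$ outside of $\sB$ rather than to progress towards $\sB$. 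The whole argument therefore reduces to showing that, for $\tilde{\gamma}$ large enough, advancing towards $\sB$ strictly dominates staying put.

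First I would invoke monotone $(w,\Delta)$ controllability to fix a policy $\pi$ such that from any state $s_0 \notin \sB$ the reward level increases by at least a factor $1/\Delta$ within $w$ steps, i.e.\ it climbs one rung of the discrete ladder $\{\Delta,\Delta^2,\dots\}$. Since by the same condition the top rung $\Delta$ can only be improved upon by entering $\sB$ (where the reward jumps to $1$), iterating $\pi$ reaches $\sB$ after a finite time $t_0$ bounded by the number of traversed rungs times $w$. This plays the role of the ``reachable in finite time'' step of theorem \ref{theorem:cp_reward_function}, but now via an explicit advancing policy instead of an abstract one.

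The core estimate is then a local comparison at a state of reward level $\Delta^j$. Staying there forever yields discounted return $\Delta^j/(1-\tilde{\gamma})$, whereas following $\pi$ guarantees reaching the next rung $\Delta^{j-1}$ within at most $w-1$ additional discounted steps, giving at least $\tilde{\gamma}^{\,w-1}\Delta^{j-1}/(1-\tilde{\gamma})$. Advancing is thus preferable as soon as $\tilde{\gamma}^{\,w-1}\Delta^{j-1} > \Delta^j$, i.e.\ $\tilde{\gamma} > \Delta^{1/(w-1)}$, independently of the rung index $j$. Because $0<\Delta<1$ we have $\Delta^{1/(w-1)}<1$, so such a $\tilde{\gamma}\in(0,1)$ exists, and for $w=1$ the bound degenerates to $\tilde{\gamma}>0$. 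Chaining this rung-by-rung comparison shows that no level outside $\sB$ can be an optimal sink, so the optimal policy of $\tilde{M}$ drives the state into $\sB$ and, by the final-volume property of $M$, into $\sG$, where it can be maintained; hence $\tilde{\sG}\subseteq\sG$ and $\tilde{M}\subseteq M$.

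The hard part will be turning this pairwise ``advance versus stay'' inequality into a statement about the genuinely optimal policy of $\tilde{M}$: one must exclude non-stationary or mixed strategies that never settle on a single rung yet also never reach $\sB$, and one must control the (a priori small) rewards collected during the transient $w$-step windows so that the geometric-tail estimate remains valid. Making the finiteness of $t_0$ rigorous, together with the precise placement of the $w$ versus $w-1$ exponent, is where I expect the argument to need the most care.
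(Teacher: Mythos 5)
Your argument is correct, but it takes a more laborious route than the paper, which disposes of this theorem in one line: since every value of $\tilde{R}$ outside $\sB$ lies in $\{\Delta,\Delta^2,\dots\}$ and is therefore bounded above by $\Delta$, the hypotheses of theorem \ref{theorem:cp_reward_function} are satisfied verbatim, and the existence of a suitable $\tilde{\gamma}<1$ follows immediately as a special case --- no use of the monotone $(w,\Delta)$ controllability assumption is needed for the existence claim alone. What you have actually written out is, in substance, the proof of the \emph{corollary} that follows this theorem in the paper: the rung-by-rung comparison of $\Delta^j/(1-\tilde{\gamma})$ against $\tilde{\gamma}^{w-1}\Delta^{j-1}/(1-\tilde{\gamma})$, yielding the sharper threshold $\tilde{\gamma}>\Delta^{1/(w-1)}$, is precisely the paper's argument for how the step-wise structure relaxes the worst-case bound $\tilde{\gamma}>\Delta^{1/(t_0-1)}$ of corollary \ref{cor:gamma_lower_bound}. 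Your approach buys the quantitative bound at the price of the extra care you yourself flag at the end (ruling out policies that drift among rungs without ever reaching $\sB$, and handling the transient rewards inside each $w$-step window); the paper's reduction buys a clean, fully rigorous existence proof by falling back on the cruder global bound $\Delta/(1-\tilde{\gamma})$ on the return achievable outside $\sB$, which applies to arbitrary (including non-stationary) policies and so sidesteps those difficulties entirely. If your goal is only the theorem as stated, the reduction is the cleaner path; your refined estimate is better kept for the corollary.
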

\begin{proof}
        This is a special case of theorem \ref{theorem:cp_reward_function}.
\end{proof}
\begin{corollary}
        In this setting, $\tilde{M}$ is compatible to $M$, if
        $\tilde{\gamma} > \Delta^{1/(w-1)}$.
\end{corollary}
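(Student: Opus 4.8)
The plan is to adapt the return comparison from the proof of Theorem~\ref{theorem:cp_reward_function} (the source of the bound in Corollary~\ref{cor:gamma_lower_bound}), replacing the global reach time $t_0$ by the per-step advance time $w$. As there, it suffices to show that under $\tilde R$ and $\tilde\gamma$ the optimal policy of $\tilde M$ cannot profit from remaining outside $\sB$ forever: once it is forced into $\sB$ in finite time it can switch to the optimal policy of $M$ and keep the state in $\sG\subseteq\sB$, whence $\tilde{\sG}\subseteq\sG$ and $\tilde M\subseteq M$. The new ingredient is that Definition~\ref{def:stepwise_reward_functions} arranges the out-of-$\sB$ rewards into the discrete ladder $\{\Delta,\Delta^2,\dots\}$ and guarantees, via monotone $(w,\Delta)$ controllability, a policy that climbs from any rung $\Delta^m$ to the next rung $\Delta^{m-1}$ within $w$ steps, so that $\sB$ (rung $\Delta^0=1$) is reached in finite time.

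First I would isolate the single binding comparison at the top rung $\Delta$, the largest reward available outside $\sB$ and hence the strongest trap. Staying there yields $\Delta/(1-\tilde\gamma)$. For the competing climb I take the worst case permitted by the definition: it places no lower bound on the $w-1$ intermediate transition rewards (whose infimum over $\{\Delta,\Delta^2,\dots\}$ is $0$), so I may only count the initial reward $\Delta$, then $w-1$ discounted zeros, then reward $1$ from step $w$ on, giving the lower bound $\Delta+\tilde\gamma^{w}/(1-\tilde\gamma)$. Demanding that the climb strictly dominate the trap and clearing the positive factor $1-\tilde\gamma$ collapses, after cancellation, to $\tilde\gamma^{w}>\Delta\tilde\gamma$, i.e.\ $\tilde\gamma^{w-1}>\Delta$, which is exactly $\tilde\gamma>\Delta^{1/(w-1)}$.

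Next I would promote this single comparison to every rung by strong induction outward from the goal, setting $L_j:=\Delta^{j}/(1-\tilde\gamma)$. The base case is $\sB$ itself, where $V^\ast=1/(1-\tilde\gamma)=L_0$. For the step, assuming $V^\ast\ge L_i$ at all shallower rungs $i<j$, climbing one rung from $\Delta^j$ costs (worst case) the initial $\Delta^j$, then $w-1$ zeros, then lands on a rung of value at least $L_{j-1}$; this gives $V^\ast(\Delta^j)\ge \Delta^j+\tilde\gamma^{w}L_{j-1}$, and the same algebra as above shows this exceeds $L_j$ precisely when $\tilde\gamma^{w-1}>\Delta$. Hence under that bound advancing strictly beats staying at every rung, the optimal policy of $\tilde M$ climbs all the way to $\sB$ in finite time, and the compatibility conclusion follows as in Appendix~\ref{sec:cp_reward_function_proof}.

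The step I expect to be the main obstacle is precisely this induction, rather than the headline inequality. A purely global comparison --- the value of reaching $\sB$ from the deepest occupied rung versus the best trap --- yields only the much more restrictive requirement $\tilde\gamma^{mw}>\Delta$ for rung $\Delta^m$, so the sharp exponent $w-1$ genuinely relies on the local, rung-by-rung accounting and on propagating the worst-case (zero intermediate reward) lower bounds $V^\ast\ge L_j$ through the ladder. Getting the sharp constant also requires care with the off-by-one in the discount exponent, namely that crediting the initial rung reward at step $0$ while first crediting the next rung at step $w$ is what produces the exponent $w-1$, rather than $w$, in the stated bound.
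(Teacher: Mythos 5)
Your proof is correct and takes essentially the same route as the paper's: a rung-by-rung comparison of advancing to the next reward level within $w$ steps against remaining at the current level forever, which reduces to $\tilde{\gamma}^{w-1} > \Delta$. The paper records only the single-rung comparison (crediting the next rung at discount $\tilde{\gamma}^{w-1}$ rather than via your $\Delta^j + \tilde{\gamma}^{w} L_{j-1}$ accounting, which yields the identical inequality) and leaves the propagation through all rungs and the final appeal to the Theorem~\ref{theorem:cp_reward_function} argument implicit, so your explicit induction is simply a more careful write-up of the same argument.
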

\begin{proof}
        Assume that the agent is in an area where it can collect the reward $\Delta^k$.
        The optimal agent has to decide if it advances to the next "step" (where it can sustainably collect the reward $\Delta^{k-1}$) or if it stays in lower-reward areas.
        The return of the former is at least $\Delta^{k-1}\tilde{\gamma}^{w-1}/(1- \tilde{\gamma})$, the return of the latter is at most $\Delta^k/(1-\tilde{\gamma})$.
        Thus, the optimal agent will decide to steer towards the next step as long as $\tilde{\gamma} > \Delta^{1/(w-1)}$.
\end{proof}
Comparing this to corollary \ref{cor:gamma_lower_bound}, we see that if $w < t_0$, this relaxes the lower bound on $\tilde{\gamma}$.
A notable special case is $w=1$, in which there exists a policy so that the reward is increased at every step.
In this case, any $\tilde{\gamma} > 0$ is sufficient so ensure compatibility.

For reward functions that do not strictly adhere to definition \ref{def:stepwise_reward_functions} but can be followed in a step-wise manner, the relation stated in theorem \ref{theorem:step-wise-mdp} still qualitatively illustrates the following:
The smaller the number of steps after which the agent can advance to a region of higher reward (i.e.\ the step width $w$), the more relaxed the lower bound on $\tilde{\gamma}$ becomes.
In the important case that this step width becomes $1$, this lower bound is $0$.
\end{document}